\newtheorem{theorem}{Theorem}
\newtheorem{corollary}{Corollary}
\newcolumntype{C}{@{\hspace{2pt}}c}
\newcommand\blfootnote[1]{%
  \begingroup
  \renewcommand\thefootnote{}\footnote{#1}%
  \addtocounter{footnote}{-1}%
  \endgroup
}
\title{Frontal Low-rank Random Tensors for Fine-grained Action Segmentation}
\author{Yan Zhang, Krikamol Muandet, Qianli Ma, Heiko Neumann and Siyu Tang}
\author{%
  Yan Zhang$^{1,3}$ \; Krikamol Muandet$^2$\; Qianli Ma$^2$\; Heiko Neumann$^3$\; Siyu Tang$^{1}$ \\
  $^1$ETH Z\"{u}rich, Switzerland \\
  $^2$Max Planck Institute for Intelligent Systems, T\"{u}bingen, Germany \\
  $^3$Institute of Neural Information Processing, Ulm University, Germany\\
}
\date{March 2020}
\begin{document}

\maketitle

\begin{abstract}
    Fine-grained action segmentation in long untrimmed videos is an important task for many applications such as surveillance, robotics, and human-computer interaction. To understand subtle and precise actions within a long time period, second-order information (e.g. feature covariance) or higher is reported to be effective in the literature. However, extracting such high-order information is considerably non-trivial. In particular, the dimensionality increases exponentially with the information order, and hence gaining more representation power also increases the computational cost and the risk of overfitting.
    In this paper, we propose an approach to representing high-order information for temporal action segmentation via a simple yet effective bilinear form. Specifically, our contributions are: (1) From the multilinear perspective, we derive a bilinear form of low complexity, assuming that the three-way tensor has low-rank frontal slices. (2) Rather than learning the tensor entries from data, we sample the entries from different underlying distributions, and prove that the underlying distribution influences the information order. (3) We employed our bilinear form as an intermediate layer in state-of-the-art deep neural networks, enabling to represent high-order information in complex deep models effectively and efficiently. 
    Our experimental results demonstrate that the proposed bilinear form outperforms the previous state-of-the-art methods on the challenging temporal action segmentation task. One can see our project page for data, model and code: \url{https://vlg.inf.ethz.ch/projects/BilinearTCN/}.
    \blfootnote{$^*$ This work was mainly done when Y.Z. and S.T. were at MPI-IS and University of T\"{u}bingen.}
\end{abstract}

\section{Introduction}

In many applications like surgery robotics, daily activity understanding, smart user interface and others, understanding fine-grained human actions like hand-object interaction is essential.
Despite significant progress made for analyzing activities in videos with advanced deep temporal convolutional neural networks \cite{farha2019ms,lea_2017_cvpr}, detecting and segmenting fine-grained actions in long untrimmed videos remains a challenging task. 
The main difficulties lie in a differentiation of fine-grained motion patterns that are often subtle and difficult to be differentiated even by humans.

It is reported in the literature that modeling second-order feature interactions such as statistical covariance and feature co-occurrence significantly improves the performance of visual tasks like fine-grained image recognition \cite{lin2018bilinear,kong2017low}, and visual question answering \cite{Kim2018,fukui16mcb}. 
When considering higher-order information, the extracted features are more discriminative and yield impressive performance \cite{cui2017kernel, koniusz2017higher}. 
In addition, the study of \cite{zhangbilinear2018} shows that second-order information can consistently boost the performance of a temporal convolutional net \cite{lea_2017_cvpr}, brings benefits for fine-grained action segmentation. Yet, how to \emph{efficiently} extract the high-order feature interaction for the temporal action segmentation task, especially in the context of  state-of-the-art large-scale temporal convolutional neural networks \cite{farha2019ms}, has been left largely unexploited.

The critical drawback of most existing methods is that
{\em the computational cost increases exponentially with respect to the order of interactions and the input feature dimension}. 
Let us take bilinear pooling for second-order information extraction as an example. Given two feature vectors ${\bm x}\in\mathbb{R}^{D_X}$ and ${\bm y}\in\mathbb{R}^{D_Y}$, a generic bilinear model \cite[Eq. (2)]{ben2017mutan} is given by:

\begin{equation}
    \label{eq:tensor_generic}
    \bm{z} = \bm{\mathcal{T}} \times_{1} \bm{x} \times_{2} \bm{y},
\end{equation}
in which $\bm{\mathcal{T}} \in \mathbb{R}^{D_X \times D_Y \times D_Z}$ is a three-way tensor, and the operations $\times_{1}$ and $\times_{2}$ are the mode-1 and mode-2 multiplication, respectively. 
Despite being a powerful scheme, this model tends to suffer from the {\em curse of dimensionality} and {\em intractable computation} \cite{bellman2015adaptive}. 
Specifically, the number of free parameters in $\bm{\mathcal{T}}$ grows cubically with the feature dimensionality, i.e., $\mathcal{O}(D_XD_YD_Z)$, hence limiting its use in large-scale scenarios.

To reduce the complexity, a common solution is to impose specific assumptions on the structure of $\bm{\mathcal{T}}$ in Eq.~\eqref{eq:tensor_generic}. 
In the work of \cite{kim2016hadamard, ben2017mutan,kong2017low}, low-rank assumption on $\bm{\mathcal{T}}$ is employed. In the work of \cite{gao2016compact}, count sketch is used to extract approximated second-order information, which can be represented by a shorter feature vector than the full second-order information.
Although such conventional bilinear pooling methods can extract second-order information effectively, {extracting higher-order information requires extra operations \cite{cui2017kernel} and may further increase the computational cost. } The problem becomes even more drastic when we extract high-order information in the state-of-the-art temporal convolutional neural networks \cite{farha2019ms,lea_2017_cvpr}, where the bilinear pooling is employed locally as intermediate layers. As a result, the underlying deep models become 
memory-intensive and computationally intractable.

In this paper, we aim at deriving a simple bilinear model, with which changing the order of information does not involve extra computation. Therefore, our investigations have two aspects:
(1) From the multilinear perspective, we assume that each frontal slice of $\bm{\mathcal{T}}$ is a low-rank matrix. Then we derive a new bilinear form that first reduces the feature dimension and then performs vector outer product. Being different from \cite{kim2016hadamard} and \cite{ben2017mutan}, that first lift the feature dimension and then perform Hadamard product, our method has runtime complexity $\mathcal{O}(D\sqrt{d}+d)$ and space complexity $\mathcal{O}(D\sqrt{d})$ compared to $\mathcal{O}(Dd+d)$ and $\mathcal{O}(D{d})$ in their works\footnote{$D$ and $d$ are input and output feature dimensions of the bilinear model, respectively; usually, $D \ll d$.}. Thus, our bilinear operation is lightweight and can be employed in deep neural networks as intermediate layers in an efficient manner. 
(2) Rather than learning the model parameters from data, the entries of each frontal slice are determined by random projection matrices drawn from a specific distribution. Our key insight is that, while the low-rank structure allows a reduction in the number of parameters, the loss in representational power is compensated by the random projection matrices.
To demonstrate this, we show that when these random matrices are sampled with different underlying distributions, the model approximates feature maps of different reproducing kernel Hilbert spaces (RKHSs). For example, when they are sampled from the Rademacher distribution, the model approximates the multiplication of linear kernels (cf. Theorem \ref{thm:rkhs_linear}). When they are sampled from the Gaussian distribution with orthogonality constraints, the model approximates multiplication of Gaussian kernels (cf. Theorem \ref{thm:rkhs_gaussian}). Therefore, we can explicitly manipulate the model capacity without sacrificing the computational efficiency.

To apply the proposed bilinear form to the action segmentation task, we propose a novel pooling module, which can be integrated into large-scale temporal convolutional deep neural networks as an intermediate layer. The experimental results show that our method yields superior efficiency and performance to other related methods on different datasets. 

Our contributions can be summarized as follows:
\begin{itemize}
    \item We derive a novel bilinear model with a random three-way tensor. 
    Using low-rank decomposition of each frontal slice, our method significantly reduces the number of parameters of the bilinear model, and hence can serve as a computationally efficient feature fusion operation.
    \item Based on different tensor entry distributions, we prove that the proposed random tensors can estimate the feature maps to reproducing kernel Hilbert spaces (RKHSs) with different compositional kernels. Therefore, the order of information is influenced by such underlying distributions.
    \item We present new state-of-the-art results for the temporal action segmentation task.
\end{itemize}

\section{Related Work}
\label{sec:related-work}

\paragraph{Fine-grained temporal action segmentation.} 
The task of detecting and segmenting actions in videos has been studied in several works \cite{singh2016multi,richard2016temporal,6751336,singh2016first,fathi2013modeling,wang2015action,Lei_2018_CVPR,Mac_2019_ICCV}.
\cite{lea2016learning} proposes a latent convolutional skip chain CRF model in the temporal domain, and learns a set of action primitive with structured support vector machine.
\cite{lea_2017_cvpr} proposes a temporal convolutional encoder-decoder network, which models fine-grained actions using a series of  temporal convolutions, pooling, and upsampling layers.
In \cite{lea2016segmental}, a spatiotemporal CNN is proposed, which contains  a spatial component and a temporal component. The objects and their relationships in video frames are captured by the spatial convolutional filters, and changes of the object relationships over time is captured by the temporal convolution operation.
\cite{farha2019ms} proposes a multi-stage temporal convolutional network, which uses consecutive dilated convolutions to increase the receptive field over time. A temporal smoothing loss is also proposed to overcome the over-segmentation issue.
To capture high-order feature interaction, we propose a bilinear module and integrate it into the work of \cite{farha2019ms}. \cite{zhangbilinear2018} is closely related to our work. In \cite{zhangbilinear2018}, the second-order feature interaction is modelled by the proposed bilinear form, which has a low-dimensional representation without extra computation. Our work has the clear advantages that we can model higher order feature interaction without increasing the computation cost. We also show that our model can be employed in complex deep models whereas \cite{zhangbilinear2018} cannot.

\paragraph{Representing high-order information.} 
Due to the computational cost, second-order information is mostly considered in various practical tasks, such as fine-grained image understanding~\cite{carreira2012semantic, gao2016compact, yu2018statistically, koniusz2017higher, li2017second, lin2015bilinear,lin2018bilinear, lin2018second, kong2017low,li2018towards, wei2018grassmann,gou2018monet}, fine-grained action understanding~\cite{feichtenhofer16,girdhar2017attentional,cherian2017higher,zhangbilinear2018}, visual question answering~\cite{yu2018beyond,ben2017mutan,Kim2018,fukui16mcb} and beyond. 
In many studies, second-order information is extracted only once before the classification layer in a deep neural net~\cite{yu2018statistically, lin2015bilinear, lin2018bilinear, lin2018second, kong2017low, wang2017g2denet, diba2017deep,feichtenhofer16}. For example, \cite{lin2018bilinear} computes the vector outer product to merge outputs from two individual CNN streams. \cite{wei2018grassmann} transforms CNN features to the compact Grassmann manifold using singular-value decomposition. These methods either lift the feature dimension from $D$ to $D^2$, or introduce expensive computation in both forward and backward passes, and hence are not applicable as intermediate layers in deep neural networks.
Consequently, extracting higher-order ($\geq 3$) information is even more challenging, although it is reported that higher-order information is more discriminative. \cite{koniusz2017higher} computes higher-order feature co-occurrence in the {\em bag-of-words} pipeline, rather than in deep neural networks.

\paragraph{Low-rank tensor structure.} 
There exist many investigations on low-rank tensor decomposition for bilinear models. 
For instance, \cite{kim2016hadamard} assumes that each frontal slice of the three-way tensor can be decomposed into two low-rank matrices, and the fusion of the two input features can then be achieved by matrix multiplication and Hadamard product. 
To improve the performance of visual question answering, \cite{yu2017mfb} introduces more operations after the low-rank bilinear pooling \cite{kim2016hadamard}, such as dropout, power normalization, and L2 normalization. 
\cite{ben2017mutan} relies on Tucker decomposition of the tensor, producing three matrices and a smaller core three-way tensor. 

\paragraph{High-order information approximation.} 
Representing high-order information can explicitly lead to combinatorial explosions. 
To avoid the {\em curse of dimensionality}, feature approximation methods based on {\em reproducing kernel Hilbert space} (RKHS) theories have been recently investigated.
For example, \cite{kar2012random} uses binary random entries to approximate inner product kernels, especially the $p$-th order polynomial kernels. \cite{gao2016compact} and \cite{pham2013fast} use tensor sketch to approximate polynomial kernels, which have lower approximation error bound but higher computational cost. \cite{yu2016orthogonal} uses orthogonal random features to approximate feature maps of Gaussian kernels. 
To boost the computational speed, a structured version with normalized Walsh-Hadamard matrices is proposed. Such feature approximation methods also improve the efficiency of higher-order information extraction. For example, based on count sketch \cite{gao2016compact}, \cite{cui2017kernel} extracts $4^{\text{th}}$-order information from CNN features in an efficient manner.

Herein, we design a new bilinear form with lower complexity than \cite{kim2016hadamard}, \cite{yu2017mfb} and \cite{ben2017mutan}. Also, we provide a generic framework to approximate feature maps of RKHSs with compositional kernels. Via sampling tensor entries from different distributions, we show that the output feature vector lies within certain RKHS, and hence we can manipulate the model capacity while retaining the same computational complexity.

\section{Method}\label{sec:method}

In this section, we first introduce how to decrease the number of parameters in the bilinear model Eq.~\eqref{eq:tensor_generic} via low-rank assumption. Afterwards, we present our investigations on how to approximate feature maps of kernels via random projection. 
In the end, we present our bilinear module, which incorporates high order information into the state-of-the-art temporal convolutional networks.

\subsection{Tensor Frontal Low-rank Approximation}
Here we follow the tensor notations in \cite{kolda2009tensor}.
Eq.~\eqref{eq:tensor_generic} can be re-written in terms of matrix-vector multiplication as
\begin{equation}
    \label{eq:tensor_unfolding}
    \bm{z} = \bm{T_{(3)}}\, vec(\bm{x} \otimes \bm{y}),
\end{equation}
\noindent where $\bm{T_{(3)}}$ is the mode-3 matricization of $\bm{\mathcal{T}}$, $vec(\cdot)$ denotes column-wise vectorization of a matrix, and $\otimes$ denotes vector outer product. 
In other words, the bilinear operation in Eq.~\eqref{eq:tensor_generic} is equivalent to first computing the correlation matrix between the two features and then performing a linear projection.

It follows from Eq.~\eqref{eq:tensor_unfolding} that each entry of the output feature vector $\bm{z}$ is a weighted sum of all the entries in the correlation matrix $\bm{x} \otimes \bm{y}$, i.e.,
\begin{eqnarray}
    \label{eq:each_z_entry}
    z_k &=& \langle vec(\mathcal{T}[:,:,k]), vec(\bm{x} \otimes \bm{y})\rangle \nonumber \\ 
    &=& \sum_{i=1}^{D_X}\sum_{j=1}^{D_Y} \mathcal{T}[i,j,k]v_{f(i,j)},
\end{eqnarray}
where ${\bm v} := vec(\bm{x} \otimes \bm{y})$. If the frontal matrix $\mathcal{T}[:,:,k]$ is a rank-one matrix, i.e., $\mathcal{T}[:,:,k] = \bm{e} \otimes \bm{f}$ for some $\bm{e} \in \mathbb{R}^{D_X}$ and $\bm{f} \in \mathbb{R}^{D_Y}$, then we can rewrite~Eq. \eqref{eq:each_z_entry} as $$z_k 
    = \langle vec(\bm{e} \otimes \bm{f}), vec(\bm{x} \otimes \bm{y}) \rangle 
    = \langle \bm{e}, \bm{x} \rangle \langle \bm{f}, \bm{y} \rangle .$$
It is important to note that it is the second equality that allows us to reduce the computational cost when constructing the fused feature $\bm{z}$.

Thus, we define the projection matrices $\bm{E} = [\bm{e}_1,\bm{e}_2,...,\bm{e}_M]^T$ and $\bm{F} = [\bm{f}_1,\bm{f}_2,...,\bm{f}_N]^T$ for two sets of vectors $\{\bm{e}_i\}_{i=1}^M \subset{\mathcal{X}}$ and $\{\bm{f}_j\}_{j=1}^N \subset{\mathcal{Y}}$ with $M \leq D_X$ and $N \leq D_Y$. 
Then, the fusion map $\phi:\mathbb{R}^{D_X}\times\mathbb{R}^{D_Y}\to\mathbb{R}^{MN}$ can be defined as 
\begin{equation}
    \label{eq:fusion_rank_1}
    \bm{z} := \phi (\bm{x},\bm{y}) = vec\left( (\bm{E} \bm{x}) \otimes (\bm{F} \bm{y}) \right).
\end{equation}

If we assume further that $\mathcal{T}[:,:,k]$ is a rank-$R$ matrix, i.e., $\mathcal{T}[:,:,k] = \sum_{r=1}^R \bm{e}^r_i \otimes \bm{f}^r_j$, we obtain 
\begin{equation}
    \label{eq:fusion_rank_R}
    \bm{z} := \phi (\bm{x},\bm{y}) = vec\left( \sum_{r=1}^R (\bm{E}^r \bm{x}) \otimes (\bm{F}^r \bm{y}) \right),
\end{equation}
where $\bm{E}^r = [\bm{e}_1^r,\bm{e}_2^r,...,\bm{e}_M^r]^T$ and $\bm{F}^r = [\bm{f}_1^r,\bm{f}_2^r,...,\bm{f}_N^r]^T$ for $r=1,2,...,R$. 
With such a low-rank assumption, we avoid computing the high-dimensional correlation matrix $\bm{x}\otimes\bm{y}$, which considerably reduces the model parameters from $D_XD_YD_Z$ to $R(MD_X+ND_Y)$ with a small value of $R$.

\paragraph{Our novelties.}
A similar low-rank assumption is also used in \cite{ben2017mutan} and \cite{kim2016hadamard}, in which the two input feature vectors are first projected to a common vector space and then fused via the Hadamard product. Assuming the input feature vectors are of the same dimension $D$ and the output feature vector is of dimension $d$, it requires $\mathcal{O}(Dd+d)$ operations to compute and requires $\mathcal{O}(Dd)$ memory to store. In contrast, our method requires $\mathcal{O}(D\sqrt{d}+d)$ for computation and $\mathcal{O}(D\sqrt{d})$ for storage. Since in practice it normally needs more dimensions to represent a higher-order feature, i.e., $d \gg D$, our method has a consistently better runtime (see Tab. \ref{tab:tcn_comparison}), and hence is more suitable to be employed in a sophisticated deep neural network.

One notes that our low-rank approximation method is fundamentally different from previous work e.g. \cite{ben2017mutan,kim2016hadamard}. 
First, the vector outer product in Eq. \eqref{eq:tensor_unfolding} is directly derived from the generic bilinear model {\em without low-rank assumption}, while the outer product in previous work, e.g. \cite{ben2017mutan,kim2016hadamard}, is derived from the low-rank assumption. Consequently, our model explicitly incorporates feature channel correlations of input vectors, while previous methods do not. Second, After applying the tensor low-rank assumption, our derived bilinear models, e.g. Eq. \eqref{eq:fusion_rank_1} or Eq. \eqref{eq:fusion_rank_R}, still {\em retain the vector outer product}, while bilinear models in \cite{ben2017mutan,kim2016hadamard} lead to vector Hadamard product. Our model is conceptually different, rather than a reformulation of previous work. This means that our proposed bilinear models cannot be reformulated from previous work, and vice versa.

\subsection{Random Projection}
\label{sec:random_projection}
To compensate for the loss in model capacity caused by the low-rank assumption, we observe that the entries and associated distributions of $\bm{E}$ and $\bm{F}$ defined in Eq.~\eqref{eq:fusion_rank_1} and Eq.~\eqref{eq:fusion_rank_R} can influence the model capacity, without adding or removing learnable parameters or network layers.

Inspired by this observation, we propose to manipulate the model capacity by randomly sampling the entries of $\bm{E}$ and $\bm{F}$ from specific distributions and then perform random projection. 
Unlike an end-to-end learning via back-propagation, our approach provides an alternative way of building expressive representation that is explainable and is simple to use in practice.

\paragraph{Rademacher random projection.}
Motivated by \cite{kar2012random} and \cite{gao2016compact}, we specify model parameters, i.e., the projection matrices $\bm{E}^r$ and $\bm{F}^r$ in the bilinear model \eqref{eq:fusion_rank_1} or \eqref{eq:fusion_rank_R}, with random samples from the Rademacher distribution. 
Below we show that the bilinear model given in Eq. \eqref{eq:fusion_rank_1} \emph{unbiasedly} approximates, with high probability, a feature map to a reproducing kernel Hilbert space (RKHS), in which the associated kernel is the multiplication of two linear kernels in $\mathcal{X}$ and $\mathcal{Y}$, respectively.

\begin{theorem}
\label{thm:rkhs_linear}
    Let $\bm{E}^r \in \mathbb{R}^{M\times D_X}$ and $\bm{F}^r \in \mathbb{R}^{N\times D_Y}$ for any $r\in\{1,2,...,R\}$ be Rademacher random matrices whose entries are determined by an independent Rademacher random variable $\sigma\in\{-1,1\}$. 
    For any $\bm{x}_1,\bm{x}_2 \in \mathcal{X}$ and $\bm{y}_1,\bm{y}_2 \in \mathcal{Y}$, let $\bm{z}_1 = \phi(\bm{x}_1,\bm{y}_1)$ and $\bm{z}_2=\phi(\bm{x}_2,\bm{y}_2)$ be the output features given by Eq. \eqref{eq:fusion_rank_1}. 
    Define a kernel function by $k(\bm{z}_1,\bm{z}_2) =\langle \bm{z}_1,\bm{z}_2\rangle$, then we have 
    \begin{align}
    \nonumber \mathbb{E}[k(\bm{z}_1, \bm{z}_2)]&=RMN\langle \bm{x}_1, \bm{x}_2 \rangle\langle \bm{y}_1, \bm{y}_2 \rangle.
    \end{align}
\end{theorem}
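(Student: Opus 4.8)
The plan is to expand the kernel $k(\bm{z}_1,\bm{z}_2)=\langle\bm{z}_1,\bm{z}_2\rangle$ into a sum over the rank components and then evaluate the expectation term by term, exploiting the independence structure of the Rademacher entries. First I would use the fact that the Euclidean inner product of two vectorized matrices equals their Frobenius inner product, so that $k(\bm{z}_1,\bm{z}_2)=\big\langle\sum_r (\bm{E}^r\bm{x}_1)\otimes(\bm{F}^r\bm{y}_1),\ \sum_s (\bm{E}^s\bm{x}_2)\otimes(\bm{F}^s\bm{y}_2)\big\rangle_F$. Applying the elementary identity $\langle\bm{a}\otimes\bm{b},\bm{c}\otimes\bm{d}\rangle_F=\langle\bm{a},\bm{c}\rangle\langle\bm{b},\bm{d}\rangle$ to each of the $R^2$ cross terms turns the kernel into $\sum_{r,s}\langle\bm{E}^r\bm{x}_1,\bm{E}^s\bm{x}_2\rangle\,\langle\bm{F}^r\bm{y}_1,\bm{F}^s\bm{y}_2\rangle$, which neatly separates the $\bm{E}$-dependence from the $\bm{F}$-dependence.

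Next I would take the expectation. Since the matrices $\{\bm{E}^r\}$ are drawn independently of the matrices $\{\bm{F}^r\}$, the expectation of each product factorizes as $\mathbb{E}[k]=\sum_{r,s}\mathbb{E}[\langle\bm{E}^r\bm{x}_1,\bm{E}^s\bm{x}_2\rangle]\,\mathbb{E}[\langle\bm{F}^r\bm{y}_1,\bm{F}^s\bm{y}_2\rangle]$. It then suffices to evaluate one such expectation, say $\mathbb{E}[\langle\bm{E}^r\bm{x}_1,\bm{E}^s\bm{x}_2\rangle]$. Writing this in terms of the scalar entries $E^r_{mi}$ gives $\sum_{m,i,j}\mathbb{E}[E^r_{mi}E^s_{mj}]\,(x_1)_i(x_2)_j$, and I would evaluate the moment $\mathbb{E}[E^r_{mi}E^s_{mj}]$ by cases: it equals $1$ only when $r=s$ and $i=j$ (the variance of a single Rademacher variable), and it vanishes otherwise, either because distinct entries of the same matrix are independent with mean zero, or because entries of distinct matrices $\bm{E}^r,\bm{E}^s$ are independent with mean zero.

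Putting the cases together, for $r\neq s$ the factor $\mathbb{E}[\langle\bm{E}^r\bm{x}_1,\bm{E}^s\bm{x}_2\rangle]$ is zero, so only the diagonal $r=s$ terms survive; for $r=s$ the surviving entries give $\sum_{m=1}^M\sum_i (x_1)_i(x_2)_i=M\langle\bm{x}_1,\bm{x}_2\rangle$, and symmetrically the $\bm{F}$-factor yields $N\langle\bm{y}_1,\bm{y}_2\rangle$. Summing the resulting product over the $R$ diagonal indices then produces the claimed $RMN\langle\bm{x}_1,\bm{x}_2\rangle\langle\bm{y}_1,\bm{y}_2\rangle$. I do not expect a genuine obstacle here, since the argument is a second-moment computation; the part demanding the most care is the bookkeeping of the two independence layers — across rank components ($r$ versus $s$) and across columns of a fixed matrix ($i$ versus $j$) — together with the factorization that relies on $\bm{E}$ and $\bm{F}$ being mutually independent. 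A secondary subtlety is that the stated result carries the factor $R$, so the computation must be carried out for the rank-$R$ map of Eq.~\eqref{eq:fusion_rank_R}, with the rank-one map of Eq.~\eqref{eq:fusion_rank_1} recovered by setting $R=1$.
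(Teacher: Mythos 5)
Your proposal is correct and follows essentially the same route as the paper's proof: expand $\langle\bm{z}_1,\bm{z}_2\rangle$ via the identity $\langle\bm{a}\otimes\bm{b},\bm{c}\otimes\bm{d}\rangle=\langle\bm{a},\bm{c}\rangle\langle\bm{b},\bm{d}\rangle$ into the $R^2$ cross terms, kill the $r\neq r'$ terms by zero-mean independence, and evaluate the diagonal terms to get $MN\langle\bm{x}_1,\bm{x}_2\rangle\langle\bm{y}_1,\bm{y}_2\rangle$ each (the paper cites Lemma~2 of the Kar--Karnick reference for this last moment computation, where you instead carry out the elementary entry-level calculation $\mathbb{E}[E_{mi}E_{mj}]=\delta_{ij}$ --- an immaterial difference). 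You also correctly identified the same discrepancy the paper silently resolves, namely that the factor $R$ forces the proof to be run on the rank-$R$ map of Eq.~\eqref{eq:fusion_rank_R} rather than Eq.~\eqref{eq:fusion_rank_1} as literally cited in the theorem statement.
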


Next, we characterize the error of such kernel approximations.

\begin{corollary}\label{cor:variance-bound-linear}
Let $\bm{z}_1$ and $\bm{z}_2$ be defined as in Theorem \ref{thm:rkhs_linear}. 
Let $k(\bm{z}_1,\bm{z}_2) = \frac{1}{R}\langle \bm{z}_1,\bm{z}_2\rangle$. 
Then, the following inequality holds:
\begin{equation}
    \mathbb{P}\left(|k(\bm{z}_1,\bm{z}_2) - \mathbb{E}[k(\bm{z}_1,\bm{z}_2)]| > \epsilon\right) \leq 2 \exp\left(-\frac{\epsilon^2 MN}{2p^8\tilde{R}^8}\right),
\end{equation}
\noindent for some constants $\epsilon > 0$, and $p\geq 1$, $\tilde{R} \geq 1$, which are independent of the feature dimensions. 
\end{corollary}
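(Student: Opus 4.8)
The plan is to treat the normalized kernel $k(\bm{z}_1,\bm{z}_2)=\frac{1}{R}\langle\bm{z}_1,\bm{z}_2\rangle$ as an average of $MN$ uniformly bounded summands and then invoke a Hoeffding/McDiarmid-type bounded-difference inequality, whose two-sided form supplies the leading factor $2$ and whose exponent carries $MN$ in the numerator. First I would record the mean: Theorem~\ref{thm:rkhs_linear} gives $\mathbb{E}[\langle\bm{z}_1,\bm{z}_2\rangle]=RMN\langle\bm{x}_1,\bm{x}_2\rangle\langle\bm{y}_1,\bm{y}_2\rangle$, so after dividing by $R$ we have $\mathbb{E}[k]=MN\langle\bm{x}_1,\bm{x}_2\rangle\langle\bm{y}_1,\bm{y}_2\rangle$. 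Writing the $(i,j)$-th coordinate of the fused feature from Eq.~\eqref{eq:fusion_rank_R} as $z_{(i,j)}=\sum_{r=1}^{R}(\bm{E}^r\bm{x})_i(\bm{F}^r\bm{y})_j$, I would expand $k=\frac{1}{R}\sum_{i=1}^{M}\sum_{j=1}^{N}W_{ij}$ with $W_{ij}=z_{1,(i,j)}\,z_{2,(i,j)}$, exhibiting $k$ as a sum over the $MN$ index pairs.

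Next I would establish uniform boundedness of the summands. Since every entry of $\bm{E}^r,\bm{F}^r$ is $\pm 1$, each projected coordinate obeys $|(\bm{E}^r\bm{x})_i|\le\|\bm{x}\|_1$ and $|(\bm{F}^r\bm{y})_j|\le\|\bm{y}\|_1$. Collecting the input-norm bounds into a single constant $p\ge 1$ and the number/scale of the rank-$R$ terms into $\tilde{R}\ge 1$, both independent of the feature dimensions, I get $|z_{(i,j)}|\le \tilde{R}p^2$ and hence $|W_{ij}|\le \tilde{R}^2 p^4$. Each summand therefore lies in an interval whose half-width is controlled by a power of $p$ and $\tilde{R}$; squaring this range inside the Hoeffding variance is what produces the $p^8\tilde{R}^8$ in the denominator of the exponent. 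Feeding the average of $MN$ such bounded terms into the two-sided bounded-difference inequality then yields $\mathbb{P}(|k-\mathbb{E}[k]|>\epsilon)\le 2\exp(-\epsilon^2 MN/(2p^8\tilde{R}^8))$, matching the stated form.

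The hard part is that the $MN$ summands $W_{ij}$ are \emph{not} mutually independent: $W_{ij}$ and $W_{ij'}$ share the $i$-th rows of the matrices $\bm{E}^r$, while $W_{ij}$ and $W_{i'j}$ share the $j$-th rows of the $\bm{F}^r$, so Hoeffding's inequality for sums of independent variables does not apply verbatim. Legitimately producing the $MN$ factor in the numerator of the exponent is the real crux, because a naive McDiarmid estimate that treats each matrix \emph{row} as one independent block coordinate gives per-row difference constants of order $N$ (respectively $M$), whose squared sum scales like $MN(M+N)$ rather than $MN$ and thus yields a strictly weaker bound.

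To close this gap I would exploit the factorized structure of $k$ rather than bounding it blindly. For the rank-one case $R=1$ one has $W_{ij}=u_i v_j$ with $u_i=(\bm{E}\bm{x}_1)_i(\bm{E}\bm{x}_2)_i$ and $v_j=(\bm{F}\bm{y}_1)_j(\bm{F}\bm{y}_2)_j$, so that $k=(\sum_i u_i)(\sum_j v_j)$ factors into a product of two \emph{independent} sums of i.i.d.\ bounded terms, each of which concentrates by Hoeffding over $M$ and $N$ summands respectively. Concentrating the product via the decomposition $UV-\mathbb{E}U\,\mathbb{E}V=(U-\mathbb{E}U)\,\mathbb{E}V+\mathbb{E}U\,(V-\mathbb{E}V)+(U-\mathbb{E}U)(V-\mathbb{E}V)$ and bounding each fluctuation separately then transfers the concentration to $k$; the analogous argument for general $R$ follows by treating the $R$ diagonal blocks together with the mean-zero cross terms. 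I would pursue this factorized route first, since it is the one that makes the dependence structure explicit and determines the precise powers of $p$ and $\tilde{R}$ that appear in the final exponent.
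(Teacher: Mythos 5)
Your first two paragraphs are, in substance, exactly the paper's own proof: the paper defines $W_{ij}=\langle\bm{e}_i,\bm{x}_1\rangle\langle\bm{e}_i,\bm{x}_2\rangle\langle\bm{f}_j,\bm{y}_1\rangle\langle\bm{f}_j,\bm{y}_2\rangle$, bounds $|W_{ij}|\le p^4\tilde{R}^4$ by citing \cite[Lemma 4]{kar2012random} (your $\ell_1$-norm estimate plays the same role), sets $S_{\mathit{MN}}=\frac{1}{R}\sum_{ij}W_{ij}$, and applies two-sided Hoeffding to these $MN$ summands to read off the stated bound — the factor $2$ and the $MN$ come from exactly where you say they do. (Both you and the paper also slip between the sum $\frac{1}{R}\sum_{ij}W_{ij}$ and the average $\frac{1}{RMN}\sum_{ij}W_{ij}$; even under full independence, Hoeffding puts $MN$ in the numerator only for the average.)

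Where you go beyond the paper is your third paragraph, and there you are more careful than the published argument: the $W_{ij}$ are indeed not mutually independent (they share rows of $\bm{E}$ across $j$ and rows of $\bm{F}$ across $i$), and the paper applies Hoeffding to them anyway, without comment. You have identified a genuine flaw in the paper's proof. The gap in your proposal is that the factorization repair cannot deliver the stated inequality. Writing $U=\sum_i u_i$, $V=\sum_j v_j$, the single term $(U-\mathbb{E}U)\,\mathbb{E}V$ in your decomposition has standard deviation of order $\sqrt{M}\,N\,|\langle\bm{y}_1,\bm{y}_2\rangle|$ (equivalently, order $1/\sqrt{M}$ after normalizing by $MN$), whereas the claimed bound asserts fluctuations of order $1/\sqrt{MN}$; whenever $\langle\bm{y}_1,\bm{y}_2\rangle\neq 0$ this one term already exceeds the claimed scale for large $N$. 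Carrying out your three-term bound gives concentration at rate $\exp\left(-c\,\epsilon^2\,\frac{MN}{M+N}\right)$, i.e.\ governed by $\min(M,N)$ — the same order as the McDiarmid estimate you dismissed as strictly weaker, and a central-limit computation on $\bar{U}\bar{V}$ shows this is the \emph{true} rate, not an artifact of the method. The $MN$ rate would require each of the $MN$ output coordinates to be built from an independent pair $(\bm{e}_{ij},\bm{f}_{ij})$, as in the Kar--Karnick construction; the outer-product feature map of Eq.~\eqref{eq:fusion_rank_1} deliberately reuses rows and so does not have this property. In short: the crux you correctly isolated cannot be resolved by any argument, because the corollary as stated is too strong for this dependent construction; it holds only with the weakened exponent $MN/(M+N)$, or under an independence assumption the model does not satisfy.
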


Proofs of both results can be found in Appendix \ref{sec:proof-thm-1} and \ref{sec:proof-vb-linear}. 
More details on the constants $p$ and $\tilde{R}$ can be found in \cite{kar2012random}.
To remove the effect of the scaling factors, we rewrite Eq. \eqref{eq:fusion_rank_R} as 
\begin{equation}
    \label{eq:binary_normalized}
    \bm{z} = \phi(\bm{x},\bm{y})= \frac{1}{R\sqrt{MN}} \cdot vec\left(\sum_{r=1}^R (\bm{E}^r \bm{x} ) \otimes (\bm{F}^r \bm{y} ) \right).
\end{equation}

Therefore, Eq.~\eqref{eq:binary_normalized} with {\it binary tensor entries} is capable of capturing second-order interactions between two features. We refer this bilinear form as ``RPBinary'' in the experiment section.

\paragraph{Gaussian random projection.}
To increase the model capacity, a common approach is to apply the nonlinear activation function $\varphi(\cdot)$, which gives
\begin{equation}
    \label{eq:fusion_rank_R_nonlinear}
    \bm{z} := \phi (\bm{x},\bm{y}) = {vec}\left( \sum_{r=1}^R \varphi(\bm{E}^r \bm{x}) \otimes \varphi(\bm{F}^r \bm{y}) \right).
\end{equation}

\noindent Inspired by \cite{yu2016orthogonal}, we consider
\begin{align}\label{eq:orthogonal-proj}
&\bm{E}^r = \frac{1}{\sigma^r}\bm{I}_{M\times D_X}\bm{R}^r\bm{P}^r, \quad 
\bm{F}^r = \frac{1}{\rho^r}\bm{I}_{N\times D_Y}\bm{S}^r\bm{Q}^r \quad \nonumber \\
&\text{with} \quad r=1,2,...,R,
\end{align}
\noindent where $\bm{R}^r$ and $\bm{S}^r$ are diagonal matrices with diagonal entries sampled i.i.d. from the chi-squared distributions $\chi^2(D_X)$ and $\chi^2(D_Y)$ with $D_X$ and $D_Y$ degrees-of-freedom, respectively, $\bm{P}^r$ and $\bm{Q}^r$ are uniformly distributed random orthogonal matrices\footnote{Specifically, $\bm{P}^r$ and $\bm{Q}^r$ are uniformly distributed on the Stiefel manifold \cite{yu2016orthogonal,muirhead2009aspects}.}, and $\bm{I}_{M\times D_X}$ and $\bm{I}_{N\times D_Y}$ are identity matrices with the first $M$ and $N$ rows, respectively.
Here, $\{\sigma^r\}_{r=1}^R$ and $\{\rho^r\}_{r=1}^R$ are tunable bandwidth parameters, and can be learned from data.

When the nonlinear function in Eq. \eqref{eq:fusion_rank_R_nonlinear} is given by $\varphi(\bm{E}\bm{x}) := \sqrt{1/M}[\sin(\bm{E}\bm{x}),\cos(\bm{E}\bm{x})]$ and $\varphi(\bm{F}\bm{y}) := \sqrt{1/N}[\sin(\bm{F}\bm{y}),\cos(\bm{F}\bm{y})]$, the resulting representation approximates feature maps to the RKHSs corresponding to a composition of Gaussian kernels (see Appendix \ref{sec:kernel-approximation} for the proof).

\begin{theorem}
\label{thm:rkhs_gaussian}
Let $\bm{E}^r \in \mathbb{R}^{M\times D_X}$ and $\bm{F}^r \in \mathbb{R}^{N\times D_Y}$ for any $r\in\{1,2,...,R\}$ be random matrices whose entries are determined as in Eq.~\eqref{eq:orthogonal-proj}. 
For any $\bm{x}_1,\bm{x}_2 \in \mathcal{X}$ and $\bm{y}_1,\bm{y}_2 \in \mathcal{Y}$, let $\bm{z}_1 = \phi(\bm{x}_1,\bm{y}_1)$ and $\bm{z}_2=\phi(\bm{x}_2,\bm{y}_2)$ be the output features in Eq.~\eqref{eq:fusion_rank_R_nonlinear}. 
Define a kernel function $k(\bm{z}_1,\bm{z}_2) =\langle \bm{z}_1,\bm{z}_2\rangle$, then we have 
    \begin{align}
        \mathbb{E}[k(\bm{z}_1, \bm{z}_2)] \nonumber 
        &= \sum_{r=1}^R\exp\left(-\frac{\|\bm{x}_1-\bm{x}_2\|_2^2}{2\sigma^2_r}\right)\exp\left(-\frac{\|\bm{y}_1-\bm{y}_2\|_2^2} {2\rho^2_r}\right) \nonumber \\
        & + \sum_{r=1}^R\sum_{r'=r+1}^R  \Big\{\mathbb{E}[\langle \varphi(\bm{E}^r \bm{x}_1), \varphi(\bm{E}^{r'} \bm{x}_2) \rangle] \nonumber \\
        & \qquad\qquad\qquad \times\mathbb{E}[\langle \varphi(\bm{F}^r \bm{y}_1), \varphi(\bm{F}^{r'} \bm{y}_2) \rangle] \Big\}.
    \end{align}
\end{theorem}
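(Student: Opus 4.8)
The plan is to expand the kernel $k(\bm z_1,\bm z_2)=\langle\bm z_1,\bm z_2\rangle$ into a double sum over the rank indices $r,r'$ and then evaluate each summand's expectation separately on the diagonal ($r=r'$) and off the diagonal ($r\neq r'$). First I would unfold the inner product: writing $\bm z_i=vec\big(\sum_{r=1}^R \varphi(\bm E^r\bm x_i)\otimes\varphi(\bm F^r\bm y_i)\big)$ and using the identities $\langle vec(\bm A),vec(\bm B)\rangle=\langle\bm A,\bm B\rangle_F$ and $\langle\bm a\otimes\bm b,\bm c\otimes\bm d\rangle=\langle\bm a,\bm c\rangle\langle\bm b,\bm d\rangle$, the kernel becomes
\[
\langle\bm z_1,\bm z_2\rangle=\sum_{r=1}^R\sum_{r'=1}^R \langle\varphi(\bm E^r\bm x_1),\varphi(\bm E^{r'}\bm x_2)\rangle\,\langle\varphi(\bm F^r\bm y_1),\varphi(\bm F^{r'}\bm y_2)\rangle .
\]
Since the matrices $\{\bm E^r\}$ (built from $\bm R^r,\bm P^r$) are independent of the matrices $\{\bm F^r\}$ (built from $\bm S^r,\bm Q^r$), taking expectation factors each summand into a product $\mathbb E[\langle\varphi(\bm E^r\bm x_1),\varphi(\bm E^{r'}\bm x_2)\rangle]\cdot\mathbb E[\langle\varphi(\bm F^r\bm y_1),\varphi(\bm F^{r'}\bm y_2)\rangle]$, after which I would split the double sum into its diagonal and off-diagonal parts.

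For the diagonal terms I would invoke the orthogonal random feature construction of \cite{yu2016orthogonal}: the chi-type diagonal scaling together with the uniformly random rotation in Eq.~\eqref{eq:orthogonal-proj} is arranged so that each row of $\bm E^r$ is marginally $N(\bm 0,(\sigma^r)^{-2}\bm I)$. Because $\langle\varphi(\bm E^r\bm x_1),\varphi(\bm E^r\bm x_2)\rangle=\frac1M\sum_{i=1}^M\cos\big(\bm w_i^{\top}(\bm x_1-\bm x_2)\big)$ is a sum over rows, linearity of expectation means that only the marginal law of a single row enters the mean, so the orthogonal coupling between rows affects the variance but not the expectation. The Gaussian characteristic function then gives $\mathbb E[\cos(\bm w^{\top}\bm u)]=\exp(-\|\bm u\|_2^2/(2(\sigma^r)^2))$, whence the $\bm E$-factor equals $\exp(-\|\bm x_1-\bm x_2\|_2^2/(2\sigma_r^2))$ and, identically, the $\bm F$-factor equals $\exp(-\|\bm y_1-\bm y_2\|_2^2/(2\rho_r^2))$. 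Summing the product over $r$ reproduces the first sum in the statement.

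For the off-diagonal terms ($r\neq r'$) I would use that $\bm E^r$ and $\bm E^{r'}$ are independent, so $\mathbb E[\langle\varphi(\bm E^r\bm x_1),\varphi(\bm E^{r'}\bm x_2)\rangle]$ factors into $\frac1M\sum_i\big(\mathbb E[\sin]\,\mathbb E[\sin]+\mathbb E[\cos]\,\mathbb E[\cos]\big)$; the sine expectations vanish because each row's law is symmetric about the origin and $\sin$ is odd, leaving exactly the cross expectations that appear in the second sum of the theorem (and likewise for the $\bm F$-block). Collecting the surviving products then yields the cross-term sum.

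I expect the main obstacle to be the diagonal step, namely rigorously arguing that the within-matrix orthogonality leaves the mean unchanged so that the estimator remains unbiased toward the Gaussian kernel; this is precisely the unbiasedness of orthogonal random features and hinges on each row being marginally isotropic Gaussian, which is the sole purpose of the chi scaling. A secondary point requiring care is the bookkeeping of the ordered pairs in the off-diagonal sum (both $r<r'$ and $r>r'$), since, unless all bandwidths coincide, the cross expectation is \emph{not} symmetric under exchanging $r$ and $r'$, so one must verify that the reindexing into the written $\sum_{r'=r+1}^R$ form accounts for every off-diagonal contribution.
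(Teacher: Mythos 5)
Your proposal is correct and follows essentially the same route as the paper's proof in Appendix~\ref{sec:kernel-approximation}: expand $\langle\bm{z}_1,\bm{z}_2\rangle$ into the double sum over $(r,r')$, factor each expectation into an $\bm{E}$-part and an $\bm{F}$-part by independence, evaluate the diagonal terms via the unbiasedness of orthogonal random features (where the paper simply cites \cite[Theorem 1]{yu2016orthogonal}, you unpack the marginally-Gaussian-row argument), and leave the cross terms as unevaluated expectations. The bookkeeping caveat you raise at the end is genuine, and it is a point on which you are more careful than the paper itself: the paper's proof splits $\sum_{r}\sum_{r'}$ into the diagonal plus $\sum_{r}\sum_{r'=r+1}^{R}$ only, silently dropping the $r>r'$ terms, and since $\mathbb{E}[\langle\varphi(\bm{E}^r\bm{x}_1),\varphi(\bm{E}^{r'}\bm{x}_2)\rangle]=\exp\left(-\|\bm{x}_1\|_2^2/(2(\sigma^r)^2)\right)\exp\left(-\|\bm{x}_2\|_2^2/(2(\sigma^{r'})^2)\right)$ is not symmetric in $(r,r')$ when the bandwidths differ, the upper triangle alone does not account for all off-diagonal contributions, so the cross sum in Theorem~\ref{thm:rkhs_gaussian} should properly run over all ordered pairs $r\neq r'$.
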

\normalsize

A characterization of the variance of $k(\bm{z}_1,\bm{z}_2)$ in Theorem \ref{thm:rkhs_gaussian} is given in Appendix \ref{sec:proof-vb-gaussian},
which suggests that higher output feature dimension can reduce the variance of $k(\bm{z}_1,\bm{z}_2)$. 
Consequently, just by using the periodic function $\varphi(\cdot)$ and the Gaussian random projection, we can obtain infinite-order information without extra computational cost. We refer to this form as ``RPGaussianFull'' in the experiment section. In principal, one can extract different types of high-order information by using different nonlinear functions in Eq.~\eqref{eq:fusion_rank_R_nonlinear}. Further investigations on such problems are currently beyond our scope.

In practice, when used as an intermediate layer in deep neural networks, $\sin$ and $\cos$ functions are known to be difficult and unreliable to train using back-propagation \cite{parascandolo2016taming}. 
Therefore, we perform Taylor expansion of $\sin$ and $\cos$, and only use the first term. Namely, we approximate $\sin(\bm{E}x) \approx \bm{Ex}$ and $\cos(\bm{E}x) \approx 1$.  Then, we can discard the nonlinear function $\varphi(\cdot)$ in Eq. \eqref{eq:fusion_rank_R_nonlinear}, and also can apply scaling factors as in Eq. ~\eqref{eq:binary_normalized}. We refer this approximated version as ``RPGaussian'' in our experiments.
In addition, we adopt a simpler version $\bm{E}^r = \frac{\sqrt{D_X}}{\sigma^r}\bm{I_{M\times D_X}}\bm{P}^r$ and $\bm{F}^r = \frac{\sqrt{D_Y}}{\rho^r}\bm{I_{N\times D_Y}}\bm{Q}^r$. 
According to \cite{yu2016orthogonal}, such a simplified version exhibits similar empirical behavior to the original version, especially when the feature dimensionality is high. 
Moreover, rather than regarding the Gaussian radii as hyper-parameters, we learn them via back-propagation when employing the model \eqref{eq:binary_normalized} as an intermediate layer in a deep neural network. 

It is instructive to note that, in addition to what we have proposed, the distributions of $\bm{E}$ and $\bm{F}$ can be arbitrary.
We can even model these distributions using deep generative models, which is the subject of our future work.

\paragraph{Our novelties.}
Previous studies like \cite{kar2012random,yu2016orthogonal} approximate explicit RKHS embeddings of a polynomial/Gaussian kernel. Namely, they aim to derive a more powerful representation $\phi({\bm x})$ for only one {\em single} input vector ${\bm x}$. In contrast, our method approximates explicit RKHS embeddings of \textit{compositional kernels}. 
Namely, we aim to derive a more powerful and explainable representation $\phi({\bm x}, {\bm y})$ for the \textit{interaction} of {\em two} input vectors ${\bm x}$ and ${\bm y}$. In particular, our Gaussian random projection method can capture higher-order information while retaining a low computational cost, compared to methods like \cite{gao2016compact} for capturing second-order information. Due to these fundamental differences, to guarantee that our new methods work and to explain why they work, we prove new theorems as above, in which certain steps are based on well-established theorems in previous studies. 

The advantages of random projection over learning the model parameters from data are two-fold: (1) Our random projection method is solidly backed by RKHS theories, while learning-based model typically lacks intepretability. (2) Our experiment shows that our random projection method empirically outperforms learnable models. One can compare Ours (LearnableProjection) with other random projection methods in the first part of Tab. \ref{tab:tcn_comparison}.

\subsection{Temporal Action Segmentation}
\label{sec:temporalActionSeg}
Recent action segmentation approaches rely on temporal convolution neural network, in order to capture long-range temporal correlations between video frames \cite{farha2019ms,lea_2017_cvpr}. 
To boost performance with minimal architecture modification, here we demonstrate how to combine our bilinear forms with two state-of-the-art deep neural networks for action segmentation, namely the temproal convolutional network (TCN) \cite{lea_2017_cvpr} and the multi-stage temporal convolution network (MS-TCN) \cite{farha2019ms}. These two methods take video frame-wise features as input, and produce frame-wise action labels as output.

According to \cite{lea_2017_cvpr}, TCN comprises two encoder blocks and two decoder blocks, each of which has symmetric architectures with the encoder. In the TCN architecture, we replace the original max pooling layers in the two encoders by our bilinear pooling methods, like in \cite{zhangbilinear2018}.

According to \cite{farha2019ms}, MS-TCN does not use an encoder-decoder architecture, but heavily employs dilated temporal convolutions. As indicated by its name, MS-TCN incorporates multiple stages. Except the first stage which takes frame-wise features as input, other stages take the output from previous stages as input, which is the frame-wise action class probability. In this case, we only introduce our bilinear forms into the first stage to extract higher-order frame-wise features, as higher-order information in other stages lack straightforward meanings.

Specifically, after consecutive dilated temporal convolution layers in the first stage, we replace the last 1x1 convolution layer by a bilinear pooling module, which incorporates a bilinear pooling layer to extract high-order information, a temporal convolution layer to convert feature dimensions to the number of action classes, and a dropout layer to avoid overfitting. Since the high-order information tends to partition an action into smaller segments \cite[Fig. 1]{zhangbilinear2018}, we use a convolution receptive field 25 according to \cite{lea_2017_cvpr}. We find that higher and lower values than 25 leads to inferior performance. The influence of the \textit{dropout in the bilinear module} is investigated in Sec. \ref{sec:exp-mstcn}. Additionally, it is noted that neither nonlinear activation function nor normalization after the bilinear pooling operation is employed .
The modified first stage is illustrated in Fig. \ref{fig:mstcn_modify2} (b).

\begin{figure*}[h]
    \centering
    \includegraphics[width=\linewidth]{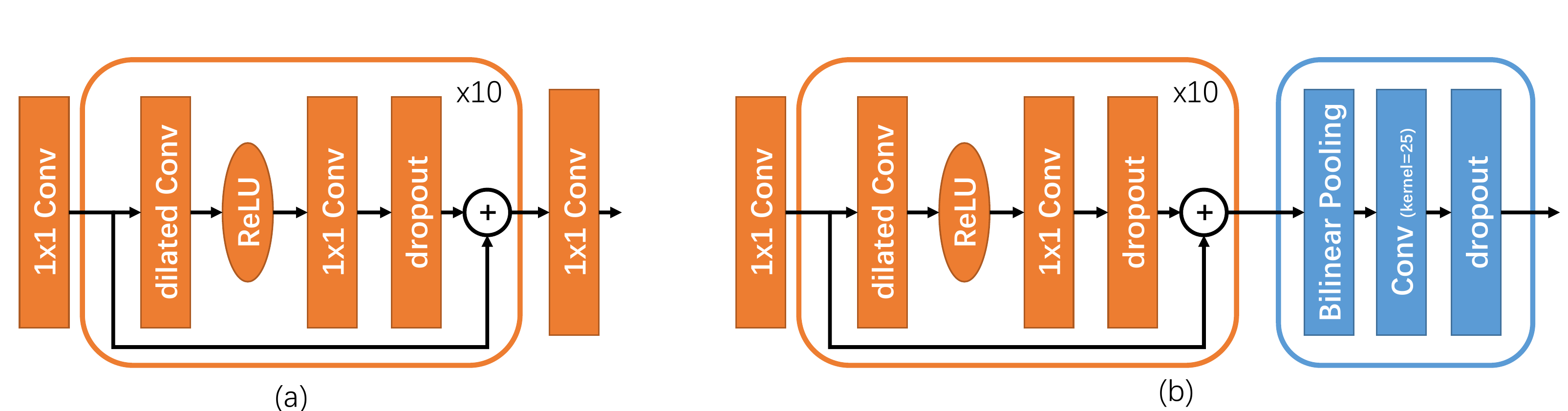}
    \caption{Illustration of combining MS-TCN~\cite{farha2019ms} and our bilinear pooling method. (a) The original architecture of a single stage in MS-TCN, which is also used for $\geq 2$ stages in our modified version of MS-TCN. (b) The original last 1x1 convolution layer is replaced by our bilinear module, which is only used in the first stage of our modified MS-TCN. The orange color denotes the original layers of MS-TCN, and the blue color denotes our proposed bilinear module.}
    \label{fig:mstcn_modify2}
\end{figure*}

\section{Experiment}\label{sec:experiments}

We conduct experiments for fine-grained temporal action segmentation, which aims at assigning each individual frame an action label. In such experiments, the two input features $\bm{x}$ and $\bm{y}$ in Eq. \eqref{eq:binary_normalized} and Eq. \eqref{eq:fusion_rank_R_nonlinear} are identical, and we set
the same number of rows (i.e. $M=N$) to matrices $\bm{E}^r$ and $\bm{F}^r$ for all $r=1,\ldots,R$. Consequently, there only remain two hyper-parameters in the bilinear model, i.e. the rank $R$ and the number of rows $N$ of matrices.
The objectives of our experiments are two-fold: (i) To verify the effectiveness of our method, we adopt the temporal convolutional network (TCN)~\cite{lea_2017_cvpr} due to its simple structure, and replace the max pooling in TCN by bilinear pooling as in \cite{zhangbilinear2018}. (ii) We propose a bilinear module (see Sec. \ref{sec:temporalActionSeg}), and combine it with the multi-stage temporal convolutional net (MS-TCN) \cite{farha2019ms} to yield state-of-the-art performance.

\paragraph{Datasets.} In our experiments, we use the {\bf 50Salads} dataset, \cite{stein2013combining}, the {\bf GTEA} dataset \cite{fathi2011learning,li2015delving}, and the {\bf Breakfast} dataset \cite{Kuehne12}. 

The {\bf 50Salads} dataset comprises 50 recordings from 25 subjects, and the length of each video ranges from 5-10 minutes with the framerate of 30fps. Since we focus on fine-grained action analysis, we use its ``mid-level'' annotation, which contains 17 different actions annotated on individual frames. For evaluation, {\bf 50Salads} is split to 5 folds.
The {\bf GTEA} dataset contains long-term video recordings from the egocentric view. Specifically, the videos are recorded as 15fps with the resolution of 1280$\times$720, and have 31222 frames in total. The annotation has 11 action classes, and is also performed frame-wisely. For evaluation, {\bf GTEA} is split to 4 folds.
The {\bf Breakfast} dataset is large-scale, and contains 1712 breakfast preparation videos performed by 52 subjects in 18 different kitchens. The number of frames is over 4 million, and the number of action classes is 48. On average, each video contains 6 actions. For evaluation, {\bf Breakfast} is split to 4 folds. 

For fair comparison, we use the same video frame features as in \cite{lea_2017_cvpr, zhangbilinear2018} in the experiments about TCN \cite{lea_2017_cvpr}, and use the same video frame features as in \cite{farha2019ms} in the experiments about MS-TCN \cite{farha2019ms}.

\paragraph{Evaluation metric.} We use three standard metrics, i.e. frame-wise accuracy, edit score and F1 score as in \cite{lea_2017_cvpr}, \cite{zhangbilinear2018} and \cite{farha2019ms}.
For the F1 score, we consider the intersection-over-union (IoU) ratio of 0.1, 0.25 and 0.5, and denote them as F1@0.1, F1@0.25 and F1@0.5, respectively. Without explicit mentioning, our F1 score means F1@0.1.
Detailed definitions of these metrics are explained in \cite{lea_2017_cvpr} and \cite{farha2019ms}.
Since each dataset has several splits, we report the average results of cross-validation as in other work.

\paragraph{Notations.} 
``LearnableProjection'' indicates $\bm{E}$ and $\bm{F}$ in Eq. \eqref{eq:binary_normalized} are learned via back-propagation. ``RPBinary'' indicates the model Eq. \eqref{eq:binary_normalized} with Rademacher random projection. ``RPGaussianFull'' and ``RPGaussian'' indicate the model Eq. \eqref{eq:fusion_rank_R_nonlinear} with and without $\varphi(\cdot)$, respectively, in which Gaussian random projection is employed.

\subsection{Action Segmentation with TCN}
We use the default architecture of TCN \cite{lea_2017_cvpr}, which comprises an encoder and a decoder with symmetric modules. The convolutional layer in each individual encoder has 64 and 96 filters, respectively.
We train the model using the Adam optimizer \cite{kingma2014adam} with a fixed learning rate of $10^{-4}$. Batch size is set to 8, and the training process terminates after 300 epochs.

\paragraph{Model analysis: comparison between bilinear forms.}
Here we set the rank $R=1$ and $N={D}/2$ for our methods where $D$ is the input feature dimension to our bilinear model (see Eq. \ref{eq:fusion_rank_R_nonlinear} and Eq. \ref{eq:binary_normalized}). The results are shown in the first part of Tab. \ref{tab:tcn_comparison}.
For performance evaluation, we repeat the experiment 3 times, and report the result with the highest sum of the three metrics. 
To evaluate the efficiency, we report the runtime per batch (batch size=8) which is the averaged result after training with the first split for 300 epochs for each dataset. 
Overall, the results suggest that ``RPGaussian'' and ``RPBinary'' have comparable performances, and outperform ``LearnableProjection'' and ``RPGaussianFull''. 
Also, we suspect that the inferior performance of ``RPGaussianFull'' is caused by the periodic activation function, making the network hard to train. 

In addition, we compare our methods with three widely used bilinear pooling methods, i.e., {\bf Compact}~\cite{gao2016compact}, {\bf Hadamard}~\cite{kim2016hadamard} and {\bf FBP}~\cite{li2017factorized} and one second-order temporal bilinear pooling method ({\bf TBP}~\cite{zhangbilinear2018}). We use the same output feature dimension for fair comparison. The results are shown in the second part of Tab.~\ref{tab:tcn_comparison}. 
They suggest that these three methods perform inferior or comparably in terms of the three metrics of action parsing, and are clearly less efficient, than our methods.

\begin{table*}[t!]
    \centering
    \small
    \caption{Comparison with different bilinear pooling methods in terms of {\em accuracy/edit score/F1 score} and runtime (millisecond). 
    For each metric and each setting, the best result is in boldface. In the column of complexity, $D$ and $d$ denote the input and output feature dimension of our bilinear model, respectively.}
    \begin{tabular}{llcccc}
    \toprule
     & &\multicolumn{2}{c}{\bf 50Salads} & \multicolumn{2}{c}{\bf GTEA} \\
      \cmidrule(lr){3-4} \cmidrule(lr){5-6}
    Method & Complexity  & Runtime & Performance & Runtime & Performance \\
    \midrule
    Ours (RPBinary) & $\mathcal{O}(D\sqrt{d}+d)$& {\bf 68.3} & 66.0/{\bf 65.9}/70.9 & 80.1 & 65.2/73.2/77.0 \\
    Ours (RPGaussian) & $\mathcal{O}(D\sqrt{d}+d)$& {68.9} & 67.6/65.2/{\bf 72.9} & {\bf 69.9} & {\bf 66.9}/{76.5}/{\bf 79.8} \\
    Ours (RPGaussianFull) & $\mathcal{O}(D\sqrt{d}+d)$ & 73.7 & 64.1/63.4/69.6& 70.0 & 64.5/73.0/78.7 \\
    Ours (LearnableProjection) & $\mathcal{O}(D\sqrt{d}+d)$& 78.6 & 66.4/65.0/70.5 & 81.6 & 64.8/74.0/77.5\\
    \midrule
    {Compact}~\cite{gao2016compact} & $\mathcal{O}(D+d\log{d})$ &95.9 & 67.2/65.8/71.7 & 120.2 & 65.9/75.3/78.1\\
    {Hadamard}~\cite{kim2016hadamard} & $\mathcal{O}(Dd+d)$ & 83.8 & {\bf 67.7}/64.4/71.5 &83.6 &  66.0/{\bf 76.6}/79.0 \\
    {FBP}~\cite{li2017factorized} & $\mathcal{O}((2k+1)Dd+d)$ & 130.5 & 64.0/61.0/67.5 & 127.6 & 63.4/71.6/74.1\\
    {TBP}~\cite{zhangbilinear2018} & $\mathcal{O}(\frac{(D+1)D}{2})$ & 553.0 & 65.7/62.8/69.0 & 618.6 & 64.4/73.9/76.3\\
    \bottomrule
    \end{tabular}
    \label{tab:tcn_comparison}
\end{table*}

\paragraph{Model analysis: investigation on the hyper-parameters of our bilinear forms.} 
Here we investigate the influence of the rank and the output feature dimension of RPGaussian and RPBinary.
Fig. \ref{fig:ablation_ap} shows the dependence of the model performance on ranks $R\in\{1,2,4,8,16\}$ and matrix rows $N \in \{1,2,4,8\} \times [\sqrt{D}]$, where $[n]$ denotes the nearest integer to $n$. 
In this case, the output feature dimension is then $\{1,4,16,64\} \times D$. In all plots, the performance increases consistently with the matrix row $N$ (hence the output feature dimension). This result is in line with our theoretical analysis on the kernel approximation error bound (see Corollary \ref{cor:variance-bound-linear} and Appendix \ref{sec:proof-vb-gaussian}): Larger values of $M$ and $N$ can yield lower variance upper bounds, hence better kernel approximation.
One can also observe that the performance saturates when further increasing the output feature dimension. 
This is due to the limited capacity of the corresponding RKHS. 

Moreover, one can observe that increasing the rank may not consistently yield better performance. 
A probable reason is that an optimal rank depends on the dataset and the applied bilinear model.

\begin{figure*}[ht]
    \centering
    \footnotesize
    \includegraphics[width=\linewidth]{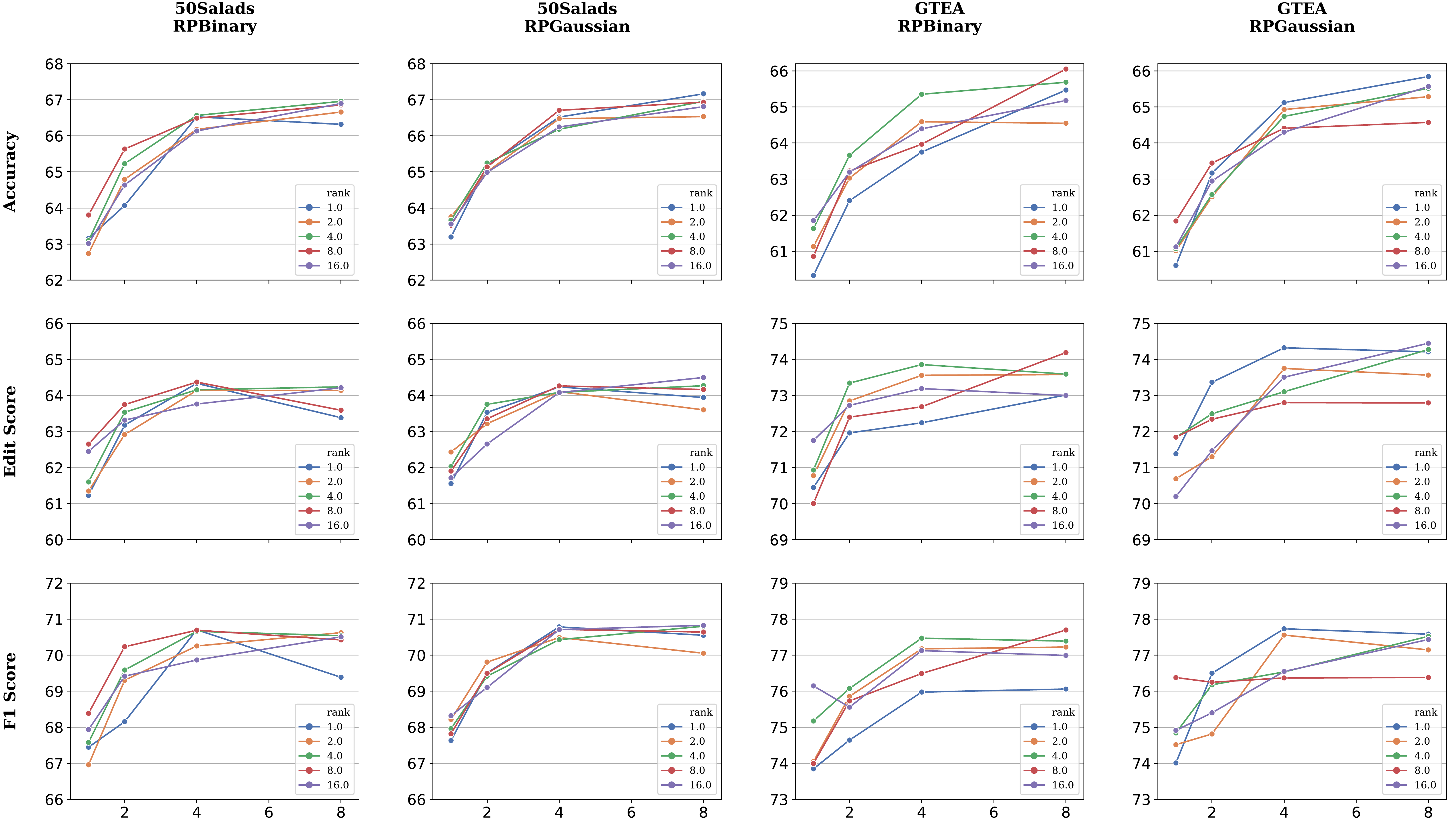}
    \caption{Performance of our RPBinary and RPGaussian model, versus dimension / rank, on datasets {\bf 50Salads} and {\bf GTEA}. In each plot, x-axis is the multiplier on the number of matrix rows $N$, y-axis is the respective performance measure, and colors denote different ranks.}
    \label{fig:ablation_ap}
\end{figure*}

\subsection{Action Segmentation with MS-TCN} 
\label{sec:exp-mstcn}
Here we verify the effectiveness of our method proposed in Sec. \ref{sec:temporalActionSeg}. 

\paragraph{Implementation details.}
Based on our previous experimental results, we set rank $R=4$ and $N=D/2$ for our bilinear model. As described in Sec.~\ref{sec:temporalActionSeg}, we use a bilinear module to extract high-order information in the end of the first stage. As presented in \cite{farha2019ms}, each single stage first uses a 1$\times$1 convolutional layer to modify the feature dimension, and then employs a dilated residual module containing 10 dilated temporal convolution operations with increasing receptive fields. We use the default network setting of MS-TCN with 4 stages, and our bilinear module is only used in the first stage. 
To conduce fair comparison with the baseline MS-TCN model, we keep other model configurations and the loss function (including the hyper-parameters) unchanged. 
Similarly to \cite{farha2019ms}, we use the Adam \cite{kingma2014adam} optimizer with a learning rate of 0.0005. The batch size is set to 1. The modified MS-TCN is trained from scratch, and training terminates after 50 epochs.

\paragraph{Results: influence of the dropout ratio.}
As presented in Sec. \ref{sec:temporalActionSeg}, the bilinear pooling module comprises a bilinear pooling model, a temporal convolution layer and a dropout layer. Here we investigate the influence of its dropout ratio, based on the {\bf GTEA} and {\bf 50Salads} datasets. For both RPGaussian and RPBinary, we only vary the dropout ratio while retaining other hyper-parameters unchanged. The results are shown in Tab. \ref{tab:dropout_exp}, in which a 0 dropout ratio means the dropout layer is not employed. 
One can see that the dropout ratio obviously influences the segmentation performance. In particular, with the Gaussian random tensors, the dropout ratio has consistent influence. The best performance is consistently achieved when the dropout ratio is 0.25. In addition, one can see that the best performance with RPGaussian is superior to the best performance with RPBinary.

\begin{table}[h]
    \centering
    \small
    \caption{The influence of the dropout ratio in the bilinear module. The best scores are highlighted in boldface.}
    \begin{tabular}{lcccccc}
        \toprule
        &   & \multicolumn{5}{c}{\bf GTEA} \\
        \textbf{Methods} & \textbf{Dropout Ratio}  & acc. & edit & F1@0.1 & F1@0.25 & F1@0.5 \\
        \midrule
        \multirow{4}{*}{RPBinary} & 0 & 76.7              & \textbf{83.5} & \textbf{87.2} & 84.2          & \textbf{73.3} \\
                                  & 0.25 & \textbf{77.4} & 82.8          & 86.6          & \textbf{84.6} & 71.3 \\
                                  & 0.5 & 76.6           & 83.0          & 87.0          & 84.5          & 72.3 \\
                                  & 0.75 & 74.0          & 81.7          & 85.8          & 81.7          & 70.2 \\
        \midrule
        \multirow{4}{*}{RPGaussian} & 0      & 77.7          & 84.0          & 88.4          & 86.2          & 73.4 \\
                                  & 0.25    & \textbf{78.5} & 84.0          & \textbf{88.5} & \textbf{86.8} & \textbf{74.6} \\
                                  & 0.5     & 76.8          & \textbf{84.6} & 87.2          & 84.3          & 73.7 \\
                                  & 0.75    & 77.7          & 80.8          & 86.0          & 84.0          & 70.8 \\
        \bottomrule
        \toprule
        &   & \multicolumn{5}{c}{\bf 50Salads} \\
        \textbf{Methods} & \textbf{Dropout Ratio}  & acc. & edit & F1@0.1 & F1@0.25 & F1@0.5 \\
        \midrule
        \multirow{4}{*}{RPBinary} & 0 & \textbf{80.3} & 67.7          & 76.2          & 73.4          & \textbf{64.7} \\
                                  & 0.25 & 75.9      & 65.0          & 72.6          & 69.9          & 61.5 \\
                                  & 0.5 & 79.4       & \textbf{68.3} & \textbf{76.4} & \textbf{73.6} & 63.9 \\
                                  & 0.75 & 70.1      & 62.4          & 68.0          & 64.8          & 54.5 \\
        \midrule
        \multirow{4}{*}{RPGaussian} & 0      & 79.2          & 67.7          & 75.5          & 72.0          & 61.9 \\
                                  & 0.25    & \textbf{80.1} & \textbf{72.3} & \textbf{78.3} & \textbf{75.0} & \textbf{66.4} \\
                                  & 0.5     & 79.4          & 70.0          & 77.3          & 74.4          & 65.4 \\
                                  & 0.75    & 70.1          & 62.5          & 68.1          & 63.7          & 53.5 \\
        \bottomrule
    \end{tabular}
    \label{tab:dropout_exp}
\end{table}

Moreover, one may think that this dropout ratio is related to temporal smoothness. For example, a higher dropout ratio could overcome over-segmentation. However, in our trials, we find that there is no consistent relation between dropout ratio and temporal regularity. In some cases, a higher dropout ratio even leads to over-segmentation. Some visualization results are shown in Fig. \ref{fig:dropout}.

\begin{figure}[h]
    \centering
    \includegraphics[width=\linewidth]{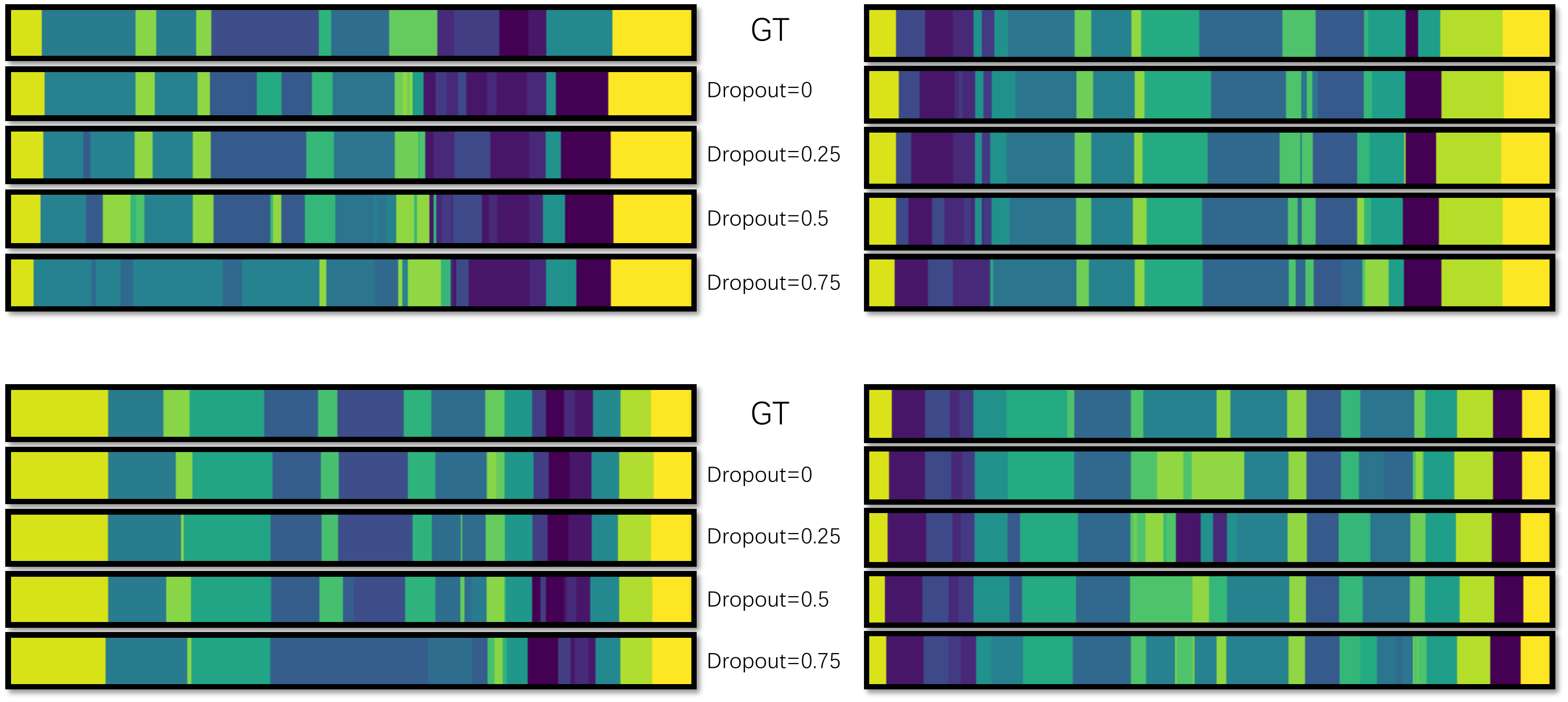}
    \caption{Visualization of influence of the dropout ratio. We randomly choose 4 test videos in {\bf 50Salads}, and show their segmentation results with RPGaussian. Different action classes are denoted by different colors. }
    \label{fig:dropout}
\end{figure}

\paragraph{Results: comparison with state-of-the-art.}
This comparison is based on the {\bf GTEA}, {\bf 50Salads}, {\bf Breakfast} datasets. According to our investigation on dropout, we choose RPBinary with no dropout, and RPGaussian(Full) with dropout ratio 0.25 as our methods to compare with other state-of-the-art.
It is noted that the original MS-TCN only uses first-order information.
The results are shown in Tab. \ref{tab:mstcn_result_alldataset} \footnote{For the method of MS-TCN \cite{farha2019ms}, we use the officially provided code and data, and report the results from our own trials. We find that our results are different from reported results in \cite{farha2019ms}. These differences are probably caused by different computation infrastructures. In our experiments for Sec. \ref{sec:exp-mstcn}, we use the CPU of AMD Ryzen Threadripper 2920X 12-Core Processor, 32G RAM, Nvidia TITAN RTX GPU with the driver version 440.59, cuda version 10.2 and CUDNN version 7602. We use PyTorch 1.2.0. }.

\begin{table*}[h]
    \centering
    \small
    \caption{Comparison with other action segmentation methods on the {\bf GTEA}, {\bf 50Salads} and {\bf Breakfast} datasets. The best results are in boldface.}
    \begin{tabular}{lccccc}
    \toprule
        &  \multicolumn{5}{c}{\bf GTEA} \\
       \textbf{Methods} & Acc. & Edit & F1@0.1 &  F1@0.25 & F1@0.5 \\
        \midrule
        Ours (RPBinary, no dropout)         & 76.7          & 83.5 & 87.2 & 84.2 & 73.3 \\
        Ours (RPGaussian, 0.25 dropout)     & \textbf{78.5} & \textbf{84.0} & \textbf{88.5} & \textbf{86.8} & \textbf{74.6}\\
        Ours (RPGaussianFull, 0.25 dropout) & 77.3          & 81.9 & 85.7 &  82.9 & 71.1\\
        \midrule
        MS-TCN~\cite{farha2019ms}           &  76.1 & 82.1 & 86.6 & 84.2 & 70.4 \\
        TCN~\cite{lea_2017_cvpr}            & 64.0 & - & 72.2 & 69.3 & 56.0 \\
        TDRN~\cite{lei2018temporal}         &  70.1 & 74.1 & 79.2 & 74.4 & 62.7\\
        LCDC~\cite{Mac_2019_ICCV}           & 65.3 & 72.8 & 75.9 & - & - \\
    \bottomrule
    \toprule
        &  \multicolumn{5}{c}{\bf 50Salads} \\
       \textbf{Methods} & Acc. & Edit & F1@0.1 &  F1@0.25 & F1@0.5 \\
        \midrule
        Ours (RPBinary, no dropout)         & \textbf{80.3} & 67.7 & 76.2 & 73.4 & 64.7 \\
        Ours (RPGaussian, 0.25 dropout)     &  {80.1}       & \textbf{72.3} & \textbf{78.3} & \textbf{75.0} & \textbf{66.4}\\
        Ours (RPGaussianFull, 0.25 dropout) & 76.9         & 66.9 & 72.9 &  69.9 & 60.1\\
        \midrule
        MS-TCN~\cite{farha2019ms}           & 79.2 & 67.1 & 74.1 & 71.8 & 62.6 \\
        TCN~\cite{lea_2017_cvpr}            & 64.7 & 59.8 & 68.0 & 63.9 & 52.6 \\
        TDRN~\cite{lei2018temporal}         & 68.1 & 66.0 & 72.9 & 68.5 & 57.2\\
        LCDC~\cite{Mac_2019_ICCV}           & 72.1 & 66.9 & 73.8 & - & - \\
    \bottomrule
    \toprule
        &  \multicolumn{5}{c}{\bf Breakfast} \\
       \textbf{Methods} & Acc. & Edit & F1@0.1 &  F1@0.25 & F1@0.5 \\
        \midrule
        Ours (RPBinary, no dropout)         & 61.4 & 56.8 & 54.1 & 48.2 & 35.8 \\
        Ours (RPGaussian, 0.25 dropout)     &  \textbf{64.2} & \textbf{63.5} & \textbf{62.0} & \textbf{56.0} & \textbf{43.7}\\
        Ours (RPGaussianFull, 0.25 dropout) & 31.5  & 36.6 & 34.0 & 27.4 & 15.4\\
        \midrule
        MS-TCN~\cite{farha2019ms}           & 63.7 & 60.0 & 49.1 & 44.4 & 34.5 \\
        TCN~\cite{lea_2017_cvpr}            & 43.3 & - & - & - & - \\
        HTK~\cite{kuehne2017weakly}         & 50.7 & - & - & - & -\\
        HTK(64)~\cite{kuehne2016end}        & 56.3 & - & - & - & -\\
        TCFPN~\cite{ding2018weakly}         & 52.0 & - & - & - & - \\
        GRU~\cite{richard2017weakly}        & 60.6 & - & - & - & - \\
    \bottomrule
    \end{tabular}
    \label{tab:mstcn_result_alldataset}
\end{table*}

\section{Conclusion}
\label{sec:conclusion}
In this work, we propose a novel bilinear model for extracting high-order information from features. To reduce the number of model parameters, we utilize low-rank tensor decomposition. Instead of using element-wise product as in other works, we use the outer product of the features to model the high-order correlations between feature channels, and hence reduce the computation complexity. 
To enrich the model representiveness while retaining the number of parameters, we use random projection to approximate feature maps to reproducing kernel Hilbert spaces associated with kernel compositions. To validate the effectiveness of our method, we perform extensive experiments on the action segmentation task, and have achieved state-of-the-art performance on challenging benchmarks. Our bilinear pooling operation is lightweight, easy to use, and can serve as a natural tool for fine-grained visual understanding and information fusion. 

In the future we will investigate how to combine our work with the deep generative models. That is, instead of sampling entries from a pre-defined distribution, we aim to learn to model these distributions using deep generative models.

\subsubsection*{Acknowledgement}
Y.~Zhang, Q.~Ma and S.~Tang acknowledge funding by Deutsche Forschungsgemeinschaft (DFG, German Research Foundation) Projektnummer 276693517 SFB 1233. 

\bibliographystyle{unsrt} 
\bibliography{references}


\clearpage

\begingroup
\onecolumn 

\appendix
\section*{Appendix}

\setcounter{page}{1}

\section{Proof of Theorem \ref{thm:rkhs_linear}}
\label{sec:proof-thm-1}

\begin{proof}
It follows from Eq.~\eqref{eq:fusion_rank_R} and the property of an inner product of rank-one operators that
\begin{align}
    \label{eq:proof_1}
    k(\bm{z}_1, \bm{z}_2)
    &:= \langle \bm{z}_1, \bm{z}_2 \rangle \\
    &= \left\langle \sum_{r=1}^R vec(\bm{E}^r\bm{x}_1 \otimes \bm{F}^r\bm{y}_1), \sum_{r=1}^R vec(\bm{E}^r\bm{x}_2 \otimes \bm{F}^r\bm{y}_2) \right\rangle_{\mathbb{R}^{\mathit{MN}}}  \nonumber \\
    &= \sum_{r=1}^R\sum_{r'=1}^R\left\langle  vec(\bm{E}^r\bm{x}_1 \otimes \bm{F}^r\bm{y}_1), vec(\bm{E}^{r'}\bm{x}_2 \otimes \bm{F}^{r'}\bm{y}_2) \right\rangle_{\mathbb{R}^{\mathit{MN}}}  \nonumber \\
    &= \sum_{r=1}^R\sum_{r'=1}^R \left\langle  \bm{E}^r\bm{x}_1, \bm{E}^{r'}\bm{x}_2  \right\rangle 
    \left\langle \bm{F}^r\bm{y}_1, \bm{F}^{r'}\bm{y}_2 \right\rangle  \nonumber \\
    &= \sum_{r=1}^R\sum_{r'=1}^R \left( \sum_{i=1}^M \langle \bm{e}_i^r, \bm{x}_1 \rangle \langle \bm{e}_i^{r'}, \bm{x}_2 \rangle \right)\left( \sum_{j=1}^N  \langle \bm{f}_j^{r}, \bm{y}_1 \rangle \langle \bm{f}_j^{r'}, \bm{y}_2 \rangle \right).
\end{align}
Then, it follows that
\begin{eqnarray*}
    \mathbb{E}[k(\bm{z}_1,\bm{z}_2)] 
    &=& \sum_{r=1}^R\sum_{r'=1}^R \left( \sum_{i=1}^M \mathbb{E}[\langle \bm{e}_i^r, \bm{x}_1 \rangle \langle \bm{e}_i^{r'}, \bm{x}_2 \rangle] \right)\left( \sum_{j=1}^N  \mathbb{E}[\langle \bm{f}_j^{r}, \bm{y}_1 \rangle \langle \bm{f}_j^{r'}, \bm{y}_2 \rangle] \right) \\
    &=& \sum_{r=1}^R \left( \sum_{i=1}^M \mathbb{E}[\langle \bm{e}_i^r, \bm{x}_1 \rangle \langle \bm{e}_i^{r}, \bm{x}_2 \rangle] \right)\left( \sum_{j=1}^N  \mathbb{E}[\langle \bm{f}_j^{r}, \bm{y}_1 \rangle \langle \bm{f}_j^{r}, \bm{y}_2 \rangle] \right) \\
    && + \sum_{r=1}^R\sum_{r'=r+1}^R \left( \sum_{i=1}^M \mathbb{E}[\langle \bm{e}_i^r, \bm{x}_1 \rangle \langle \bm{e}_i^{r'}, \bm{x}_2 \rangle] \right)\left( \sum_{j=1}^N  \mathbb{E}[\langle \bm{f}_j^{r}, \bm{y}_1 \rangle \langle \bm{f}_j^{r'}, \bm{y}_2 \rangle] \right) \\
    &=& RMN\langle \bm{x}_1, \bm{x}_2 \rangle\langle \bm{y}_1, \bm{y}_2 \rangle .
\end{eqnarray*}
\noindent The last equation follows from \cite[Lemma 2]{kar2012random} and the fact that $\bm{e}_i^r$ and $\bm{f}_j^r$ are zero-mean random variables for all $i=1,\ldots,M$, $j=1,\ldots,N$ and $r = 1,\ldots,R$. 

\end{proof}

\section{Proof of Corollary \ref{cor:variance-bound-linear}}
\label{sec:proof-vb-linear}
\begin{proof}
    Let $W_{ij}:= \langle \bm{e}_i, \bm{x}_1 \rangle \langle \bm{e}_i, \bm{x}_2 \rangle \langle \bm{f}_j, \bm{y}_1 \rangle \langle \bm{f}_j, \bm{y}_2 \rangle$ for $i=1,\ldots,M$ and $j=1,\ldots,N$. For each $W_{ij}$, it follows from \cite[Lemma 4]{kar2012random} that
    $$-p^2f(p\tilde{R}^2)^2 \leq W_{ij} \leq p^2f(p\tilde{R}^2)^2,$$
    where we assume without loss of generality that $p\geq 1$ and $\tilde{R} \geq 1$. In our case, $f(x)=x$. Then, we have $-p^4\tilde{R}^4 \leq W_{ij} \leq p^4\tilde{R}^4$.
    Let $S_{\mathit{MN}} := \frac{1}{R}\sum_{i=1}^M\sum_{j=1}^N W_{ij}$. 
    Hence, we have $\mathbb{E}[S_{\mathit{MN}}] = MN\langle \bm{x}_1,\bm{x}_2\rangle\langle\bm{y}_1,\bm{y}_2\rangle$.
    Then, it follows from Hoeffding's inequality that, for all $\epsilon >0$, 
    \begin{equation}
        \mathbb{P}(|S_{\mathit{MN}} - \mathbb{E}[S_{\mathit{MN}}]| \geq \epsilon) \leq 2\exp\left(-\frac{\epsilon^2 MN}{2p^8\tilde{R}^8}\right).
    \end{equation}
    This concludes the proof.
\end{proof}

\section{Proof of Theorem \ref{thm:rkhs_gaussian}}
\label{sec:kernel-approximation}

\begin{proof}
Let $R=1$. Then, $k(\bm{z}_1, \bm{z}_2) := \langle \bm{z}_1, \bm{z}_2 \rangle =  \langle \varphi(\bm{E} \bm{x}_1), \varphi(\bm{E} \bm{x}_2) \rangle \langle \varphi(\bm{F} \bm{x}_1), \varphi(\bm{F} \bm{x}_2) \rangle$ where
\begin{eqnarray*}
\varphi(\bm{E}\bm{x}) &:=& \frac{1}{\sqrt{M}}[\sin(\bm{e}_1^\top\bm{x}),\ldots,\sin(\bm{e}_M^\top\bm{x}),\cos(\bm{e}_1^\top\bm{x}),\ldots,\cos(\bm{e}_M^\top\bm{x})], \\ \varphi(\bm{F}\bm{y}) &:=& \frac{1}{\sqrt{N}}[\sin(\bm{f}_1^\top\bm{y}),\ldots,\sin(\bm{f}_M^\top\bm{y}),\cos(\bm{f}_1^\top\bm{y}),\ldots,\cos(\bm{f}_M^\top\bm{y})].
\end{eqnarray*}
With $\bm{E} = \frac{1}{\sigma}\bm{I}_{M\times D_X}\bm{R}\bm{P}$ and $\bm{F} = \frac{1}{\rho}\bm{I}_{N\times D_Y}\bm{S}\bm{Q}$, we have

\begin{align*}
    \mathbb{E}[k(\bm{z}_1, \bm{z}_2)] 
    &= \mathbb{E}[\langle \varphi(\bm{E} \bm{x}_1), \varphi(\bm{E} \bm{x}_2) \rangle] \mathbb{E}[\langle \varphi(\bm{F} \bm{y}_1), \varphi(\bm{F} \bm{y}_2) \rangle] \\
    &=\exp\left(-\frac{\|\bm{x}_1-\bm{x}_2\|_2^2}{2\sigma^2}\right)\exp\left(-\frac{\|\bm{y}_1-\bm{y}_2\|_2^2} {2\rho^2}\right),
\end{align*}
\noindent where the last equality follows from \cite[Theorem 1]{yu2016orthogonal}. For $R > 1$, we have 
$$k(\bm{z}_1, \bm{z}_2) := \langle \bm{z}_1, \bm{z}_2 \rangle = \sum_{r=1}^R\sum_{r'=1}^R \langle \varphi(\bm{E}^r \bm{x}_1), \varphi(\bm{E}^{r'} \bm{x}_2) \rangle \langle \varphi(\bm{F}^r \bm{x}_1), \varphi(\bm{F}^{r'} \bm{x}_2) \rangle.$$
Hence, with $\bm{E}^r = \frac{1}{\sigma^r}\bm{I}_{M\times D_X}\bm{R}^r\bm{P}^r$ and $\bm{F}^r = \frac{1}{\rho^r}\bm{I}_{N\times D_Y}\bm{S}^r\bm{Q}^r$,
\begin{align*}
    \mathbb{E}[k(\bm{z}_1, \bm{z}_2)] 
    &=  \sum_{r=1}^R\sum_{r'=1}^R \mathbb{E}[\langle \varphi(\bm{E}^r \bm{x}_1), \varphi(\bm{E}^{r'} \bm{x}_2) \rangle]\mathbb{E}[\langle \varphi(\bm{F}^r \bm{y}_1), \varphi(\bm{F}^{r'} \bm{y}_2) \rangle] \\
    &= \sum_{r=1}^R \mathbb{E}[\langle \varphi(\bm{E}^r \bm{x}_1), \varphi(\bm{E}^{r} \bm{x}_2) \rangle]\mathbb{E}[\langle \varphi(\bm{F}^r \bm{y}_1), \varphi(\bm{F}^{r} \bm{y}_2) \rangle]  \\
    & +\sum_{r=1}^R\sum_{r'=r+1}^R \mathbb{E}[\langle \varphi(\bm{E}^r \bm{x}_1), \varphi(\bm{E}^{r'} \bm{x}_2) \rangle]\mathbb{E}[\langle \varphi(\bm{F}^r \bm{y}_1), \varphi(\bm{F}^{r'} \bm{y}_2) \rangle] \\
    &= \sum_{r=1}^R\exp\left(-\frac{\|\bm{x}_1-\bm{x}_2\|_2^2}{2\sigma^2_r}\right)\exp\left(-\frac{\|\bm{y}_1-\bm{y}_2\|_2^2} {2\rho^2_r}\right) \\
    & +\sum_{r=1}^R\sum_{r'=r+1}^R \mathbb{E}[\langle \varphi(\bm{E}^r \bm{x}_1), \varphi(\bm{E}^{r'} \bm{x}_2) \rangle]\mathbb{E}[\langle \varphi(\bm{F}^r \bm{y}_1), \varphi(\bm{F}^{r'} \bm{y}_2) \rangle] \\
    &= \sum_{r=1}^R\exp\left(-\frac{\|\bm{x}_1-\bm{x}_2\|_2^2}{2\sigma^2_r}\right)\exp\left(-\frac{\|\bm{y}_1-\bm{y}_2\|_2^2} {2\rho^2_r}\right) + b
\end{align*}
\noindent where $b := \sum_{r=1}^R\sum_{r'=r+1}^R \mathbb{E}[\langle \varphi(\bm{E}^r \bm{x}_1), \varphi(\bm{E}^{r'} \bm{x}_2) \rangle]\mathbb{E}[\langle \varphi(\bm{F}^r \bm{y}_1), \varphi(\bm{F}^{r'} \bm{y}_2) \rangle]$. 
\end{proof}

\section{Approximation error bound for Gaussian random projection}
\label{sec:proof-vb-gaussian}

We characterize the variance of $k(\bm{z}_1,\bm{z}_2)$ used in Theorem \ref{thm:rkhs_gaussian}. To simplify the presentation, we focus on the case that $R=1$.

\begin{corollary}\label{cor:variance-bound-gaussian}
Let $\bm{z}_1$, $\bm{z}_2$, and $k(\bm{z}_1,\bm{z}_2)$ be defined as in Theorem \ref{thm:rkhs_gaussian}, as well as $R=1$, $a = \|\bm{x}_1-\bm{x}_2\|_2^2/\sigma$ and $b = \|\bm{y}_1-\bm{y}_2\|_2^2/\rho$. Then, there exist functions $f$ and $g$ such that
\begin{equation}
    \mathrm{Var}(k(\bm{z}_1,\bm{z}_2)) \leq A\cdot B + A\cdot C + B\cdot D,
\end{equation}
\noindent where
\begin{align*}
    A &= \frac{1}{2M}\left[\left(\left(1-e^{-a^2}\right)^2 - \frac{M-1}{D_X}e^{-a^2}a^4\right) + \frac{f(a)}{D_X^2}\right],
    & C &= \left[ \exp\left(-\frac{b^2}{2}\right)\right]^2
    \\
    B &= \frac{1}{2N}\left[\left(\left(1-e^{-b^2}\right)^2 - \frac{N-1}{D_Y}e^{-b^2}b^4\right) + \frac{g(b)}{D_Y^2}\right], 
    & D &= \left[ \exp\left(-\frac{a^2}{2}\right)\right]^2.
\end{align*}
\end{corollary}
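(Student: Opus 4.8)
The plan is to exploit the product structure of $k(\bm{z}_1,\bm{z}_2)$ in the case $R=1$. As computed in the proof of Theorem \ref{thm:rkhs_gaussian}, we have
\[
k(\bm{z}_1,\bm{z}_2) = \langle \varphi(\bm{E}\bm{x}_1), \varphi(\bm{E}\bm{x}_2)\rangle \cdot \langle \varphi(\bm{F}\bm{y}_1), \varphi(\bm{F}\bm{y}_2)\rangle =: U\cdot V,
\]
where $U$ depends only on $\bm{E}=\frac{1}{\sigma}\bm{I}_{M\times D_X}\bm{R}\bm{P}$ and $V$ depends only on $\bm{F}=\frac{1}{\rho}\bm{I}_{N\times D_Y}\bm{S}\bm{Q}$. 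First I would observe that $\bm{E}$ and $\bm{F}$ are built from mutually independent random ingredients $(\bm{R},\bm{P})$ and $(\bm{S},\bm{Q})$, so that $U$ and $V$ are \emph{independent} random variables. This independence is the structural fact that makes the whole argument go through.

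Next I would invoke the elementary variance identity for a product of independent random variables,
\[
\mathrm{Var}(UV) = \mathrm{Var}(U)\,\mathrm{Var}(V) + \mathrm{Var}(U)\,(\mathbb{E}[V])^2 + (\mathbb{E}[U])^2\,\mathrm{Var}(V),
\]
which follows from $\mathbb{E}[U^2V^2]=\mathbb{E}[U^2]\mathbb{E}[V^2]$, $\mathbb{E}[UV]=\mathbb{E}[U]\mathbb{E}[V]$, and $\mathbb{E}[U^2]=\mathrm{Var}(U)+(\mathbb{E}[U])^2$ (similarly for $V$). From the $R=1$ computation in the proof of Theorem \ref{thm:rkhs_gaussian}, the marginal expectations are exactly the one-dimensional Gaussian kernel values $\mathbb{E}[U]=\exp(-a^2/2)$ and $\mathbb{E}[V]=\exp(-b^2/2)$, so the stated quantities $D=(\mathbb{E}[U])^2$ and $C=(\mathbb{E}[V])^2$ appear with no further work.

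It then remains to control the two marginal variances. Here $U$ and $V$ are precisely the orthogonal-random-feature estimators of a Gaussian kernel studied in \cite{yu2016orthogonal}, so I would apply their variance bound to obtain $\mathrm{Var}(U)\leq A$ and $\mathrm{Var}(V)\leq B$: the leading term $\frac{1}{2M}(1-e^{-a^2})^2$ is the i.i.d.\ random-feature variance, the term $-\frac{M-1}{D_X}e^{-a^2}a^4$ is the negative correction afforded by orthogonality, and the remaining $O(1/D_X^2)$ contributions are absorbed into the unspecified function $f(a)$ (and analogously into $g(b)$ for $V$). Substituting these two upper bounds into the identity above, and using that $A,B,C,D\geq 0$, converts the three summands $\mathrm{Var}(U)\mathrm{Var}(V)$, $(\mathbb{E}[U])^2\mathrm{Var}(V)$, $\mathrm{Var}(U)(\mathbb{E}[V])^2$ into $A\cdot B$, $B\cdot D$, and $A\cdot C$, yielding $\mathrm{Var}(k)\leq A\cdot B + A\cdot C + B\cdot D$. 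I expect the main obstacle to be this last ingredient: pinning down the explicit form of $A$ and $B$ requires the delicate orthogonal-random-feature variance computation of \cite{yu2016orthogonal}, keeping careful track of the $1/D$ orthogonality gain and packaging the higher-order remainder into $f$ and $g$; by contrast, the independence-based product decomposition is routine bookkeeping.
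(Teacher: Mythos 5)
Your proposal is correct and follows essentially the same route as the paper's own proof: decompose $k(\bm{z}_1,\bm{z}_2)=UV$ with $U,V$ independent, apply the product-variance identity $\mathrm{Var}(UV)=\mathrm{Var}(U)\mathrm{Var}(V)+\mathrm{Var}(U)(\mathbb{E}[V])^2+\mathrm{Var}(V)(\mathbb{E}[U])^2$, and then import both the expectations and the variance bounds $\mathrm{Var}(U)\leq A$, $\mathrm{Var}(V)\leq B$ from Theorem 1 of \cite{yu2016orthogonal}. Your write-up is in fact slightly more careful than the paper's, since you justify the independence of $U$ and $V$ from the independence of $(\bm{R},\bm{P})$ and $(\bm{S},\bm{Q})$ and note the nonnegativity needed to substitute the upper bounds into the product terms.
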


\begin{proof}
Let R=1. Then, we have
$$k(\bm{z}_1,\bm{z}_2) = \underbrace{\langle \varphi(\bm{E} \bm{x}_1), \varphi(\bm{E} \bm{x}_2) \rangle}_{U}
\underbrace{ \langle \varphi(\bm{F} \bm{x}_1), \varphi(\bm{F} \bm{x}_2) \rangle}_{V}.$$
Since $U$ and $V$ are independent, we have
\begin{equation}\label{eq:variance}
\mathrm{Var}(k(\bm{z}_1,\bm{z}_2)) = \mathrm{Var}(U)\mathrm{Var}(V)
+ \mathrm{Var}(U)(\mathbb{E}[V])^2 + \mathrm{Var}(V)(\mathbb{E}[U])^2,
\end{equation}
\noindent where
\begin{equation*}
\begin{aligned}[c]
    (\mathbb{E}[U])^2 &= ( \mathbb{E}\left[ \langle \varphi(\bm{E} \bm{x}_1), \varphi(\bm{E}\bm{x}_2) \rangle \right])^2 \\
    \mathrm{Var}(U) 
    &=  \mathrm{Var}\left(\langle \varphi(\bm{E} \bm{x}_1), \varphi(\bm{E} \bm{x}_2) \rangle\right)
\end{aligned}
\qquad
\begin{aligned}[c]
    (\mathbb{E}[V])^2 
    &= \left(\mathbb{E}\left[\langle \varphi(\bm{F} \bm{y}_1), \varphi(\bm{F} \bm{y}_2) \rangle \right]\right)^2 \\
    \mathrm{Var}(V) 
    &= \mathrm{Var}\left(\langle \varphi(\bm{F} \bm{y}_1), \varphi(\bm{F} \bm{y}_2) \rangle\right)
\end{aligned}
\end{equation*}
It follows from \cite[Theorem 1]{yu2016orthogonal} that 
\begin{align*}
(\mathbb{E}[U])^2 = \left( \exp\left(-\frac{\|\bm{x}_1-\bm{x}_2\|_2^2}{2\sigma^2}\right)\right)^2, \quad
(\mathbb{E}[V])^2 = \left( \exp\left(-\frac{\|\bm{y}_1-\bm{y}_2\|_2^2}{2\rho^2}\right)\right)^2.
\end{align*}
Let $a = \|\bm{x}_1-\bm{x}_2\|_2^2/\sigma$ and $b = \|\bm{y}_1-\bm{y}_2\|_2^2/\rho$. Then, by \cite[Theorem 1]{yu2016orthogonal}, there exists a function $f$ and $g$ such that
\begin{eqnarray*}
    \mathrm{Var}(U) &\leq& \frac{1}{2M}\left[\left(\left(1-e^{-a^2}\right)^2 - \frac{M-1}{D_X}e^{-a^2}a^4\right) + \frac{f(a)}{D_X^2}\right]\\
    \mathrm{Var}(V) &\leq& \frac{1}{2N}\left[\left(\left(1-e^{-b^2}\right)^2 - \frac{N-1}{D_Y}e^{-b^2}b^4\right) + \frac{g(b)}{D_Y^2}\right] .
\end{eqnarray*}
Substituting everything back into \eqref{eq:variance} yields the result.
\end{proof}
\endgroup

\section{Bilinear Residual Module: Another Modification of MS-TCN}
\label{sec:bilinear_res_module}

In Sec. \ref{sec:temporalActionSeg}, we have domenstrated how to incorporate our proposed bilinear forms into the state-of-the-art MS-TCN network. Here we introduce an alternative method, i.e. the bilinear residual module, which is illustrated in Fig.~\ref{fig:mstcn_modify}. 
First, we use bilinear pooling to extract the second-order information, and then use a regularized power normalization \cite{zhangbilinear2018} to densify the feature and use channel-wise max normalization to re-scale the feature value \cite{lea_2017_cvpr}. Afterwards, we use a convolution layer with the receptive field 25 to reduce the feature dimension to the number of classes. To prevent overfitting we use a dropout layer, and then we compute the average between the first-order information and the high-order information. 
Our bilinear residual module is applied at the end of each single stage of MS-TCN. 

\begin{figure*}[h]
    \centering
    \includegraphics[width=\linewidth]{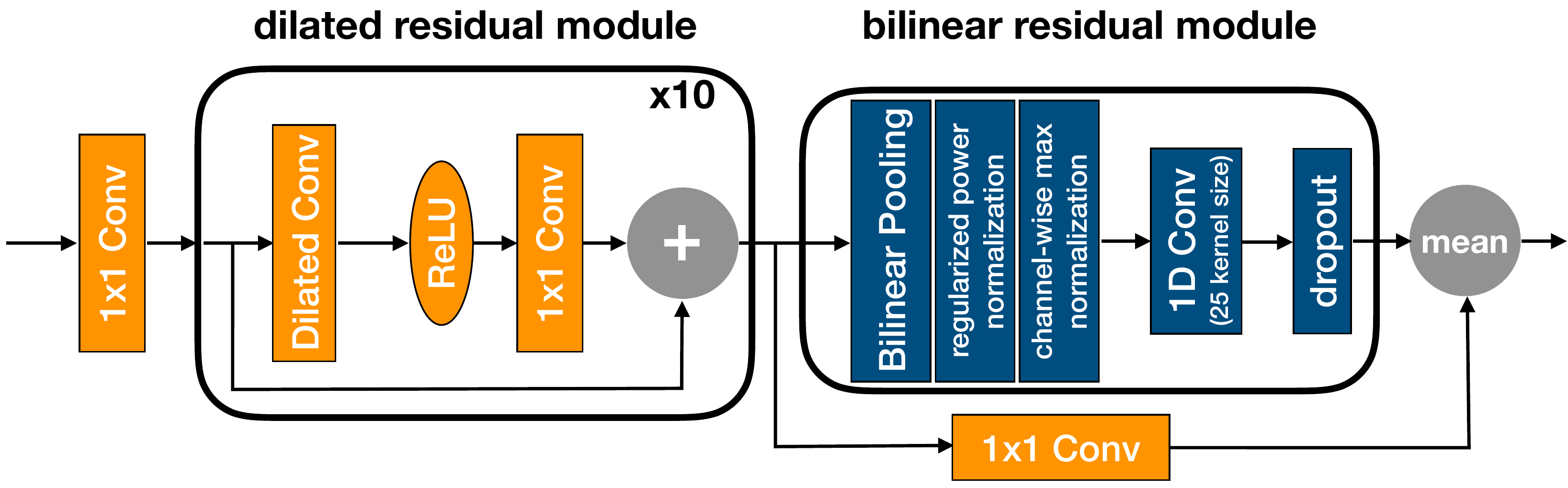}
    \caption{Illustration of an alternative approach to combine MS-TCN~\cite{farha2019ms} and bilinear pooling. In constrast to the bilinear module in Sec. \ref{sec:temporalActionSeg}, the bilinear residual module is embedded into the architecture at every stage. The orange color denotes the original layers of MS-TCN, and the blue color denotes our proposed module.}
    \label{fig:mstcn_modify}
\end{figure*}

\paragraph{Comparison with state-of-the-art.} 
Here we follow the same experimental protocol in Sec. \ref{sec:exp-mstcn}, but perform evaluation only on {\bf 50Salads} and {\bf GTEA}.  
As shown in Tab.~\ref{tab:mstcn_result}, our models (especially RPBinary and RPGaussian) consistently outperform the state-of-the-arts (TCN, TDRN, MS-TCN, LCDC), validating the effectiveness of the proposed bilinear model. Note that, even with RPGaussianFull, which is hard to train by back-propagation, our bilinear form achieves competitive performance on the {\bf GTEA} dataset.
For a fair and complete comparison, we further replace our proposed bilinear models by three widely used light-weight bilinear models in the bilinear residual module, namely, {\bf Compact}~\cite{gao2016compact}, {\bf Hadamard}~\cite{kim2016hadamard} and {\bf FBP}~\cite{li2017factorized}. Again, the proposed bilinear models show superior performance on most of the evaluation metrics. Together with the results presented in Tab.~\ref{tab:tcn_comparison}, they clearly demonstrate the representational power and the computational efficiency of the proposed bilinear models.

\begin{table*}[h]
    \centering
    \scriptsize
    \caption{Comparison with other models on temporal action segmentation task. The best results are in \textbf{boldface}. The results of MS-TCN are from \cite{farha2019ms}. }
    \begin{tabular}{lcccccccccc}
    \toprule
         &  \multicolumn{5}{c}{\bf 50 Salads} & \multicolumn{5}{c}{\bf GTEA} \\
        & Acc. & Edit & F1@0.1 &  F1@0.25 & F1@0.5 & Acc. & Edit & F1@0.1 &  F1@0.25 & F1@0.5 \\
         \cmidrule{2-6} \cmidrule{7-11}
        Ours (RPBinary) & 79.9 & 70.7& 78.0 &  {75.2}&65.4 &  {77.0} & 81.4 & {86.5} & {\bf 84.5} & {71.7} \\
        Ours (RPGaussian) & {80.6} & {\bf 71.0} & {\bf 78.4} & {\bf 75.8} &  {66.7} & {\bf 77.2} &  {82.2} &  {86.7} &  {84.3} & {\bf 72.7}\\
        Ours (RPGaussianFull) & 77.4 & 67.5 & 74.9 & 70.9 & 60.7 & 76.8 & {\bf 82.5} & {\bf 87.3} &  {84.3} &  {71.8}\\
        \midrule
        TCN~\cite{lea_2017_cvpr} & 64.7 & 59.8 & 68.0 & 63.9 & 52.6 & 64.0 & - & 72.2 & 69.3 & 56.0 \\
        TDRN~\cite{lei2018temporal} & 68.1 & 66.0 & 72.9 & 68.5 & 57.2 & 70.1 & 74.1 & 79.2 & 74.4 & 62.7\\
        MS-TCN~\cite{farha2019ms} &  {80.7} & 67.9 & 76.3 & 74.0 & 64.5 & 76.3 & 79.0& 85.8 & 83.4 & 69.8 \\
        LCDC~\cite{Mac_2019_ICCV}  &  72.1 & 66.9 & 73.8 & - & - & 65.3 & 72.8 & 75.9 & - & - \\
        \midrule
        MS-TCN + {Compact}~\cite{gao2016compact} & {\bf 81.1} &  {70.7} & 77.6 & 75.1 & {\bf 67.1} & 75.8 & 80.9 & 86.0 & 83.7 & 70.2\\
        MS-TCN + {Hadamard}~\cite{kim2016hadamard} & 78.3 & 68.3 & 75.3 & 72.4 & 62.5 & {77.0} & {81.5} & 86.0 & 83.5 & 70.7 \\ 
        MS-TCN + {FBP}~\cite{li2017factorized} & 78.5 & 70.0 & 76.5 & 73.6 & 64.6 & 75.4 & 79.3 & 84.0 & 81.8 & 69.9\\
    \bottomrule
    \end{tabular}
    \label{tab:mstcn_result}
\end{table*}

\end{document}